\def\eqref#1{equation~\ref{#1}}
\def\1{\bm{1}}
\DeclareMathAlphabet{\mathsfit}{\encodingdefault}{\sfdefault}{m}{sl}
\SetMathAlphabet{\mathsfit}{bold}{\encodingdefault}{\sfdefault}{bx}{n}
\def\gN{{\mathcal{N}}}
\def\sP{{\mathbb{P}}}
\def\sQ{{\mathbb{Q}}}
\def\sS{{\mathbb{S}}}
\newcommand{\E}{\mathbb{E}}
\newcommand{\R}{\mathbb{R}}
\newcommand{\bpi}{\boldsymbol{\pi}}
\newcommand{\ddd}{\mathrm{d}}
\newcommand{\pr}[1]{\left( #1 \right)} %% parenthesis ()
\newcommand{\px}[1]{\left\{ #1 \right\}} %% parenthesis {}
\newcommand{\ps}[1]{\left[ #1 \right]} %% parenthesis []
\newcommand{\TV}[2]{\| #1 - #2 \|_{\mathrm{TV}}}
\newtheorem{proposition}{Proposition}
\newtheorem{theorem}{Theorem}
\newtheorem{lemma}{Lemma}
\newtheorem{assumption}{Assumption}
\newcounter{xxx}
\newcommand{\printfnsymbol}[1]{%
  \textsuperscript{\@fnsymbol{#1}}%
}
\newcommand{\bwd}[1]{\accentset{\leftharpoonup}{#1}}
\newcommand{\fwd}[1]{\accentset{\rightharpoonup}{#1}}
\newcommand{\jhbwd}{\mathpalette{\overarrowsmall@\leftarrowfill@}}
\newcommand{\overarrowsmall@}[3]{%
  \vbox{%
    \ialign{%
      ##\crcr
      #1{\smaller@style{#2}}\crcr
      \noalign{\nointerlineskip}%
      $\m@th\hfil#2#3\hfil$\crcr
    }%
  }%
}
\def\smaller@style#1{%
  \ifx#1\displaystyle\scriptstyle\else
    \ifx#1\textstyle\scriptstyle\else
      \scriptscriptstyle
    \fi
  \fi
}
\newcommand{\jhfwd}{\mathpalette{\overrightarrowsmall@\rightarrowfill@}}
\newcommand{\overrightarrowsmall@}[3]{%
  \vbox{%
    \ialign{%
      ##\crcr
      #1{\smaller@style{#2}}\crcr
      \noalign{\nointerlineskip}%
      $\m@th\hfil#2#3\hfil$\crcr
    }%
  }%
}
\def\smaller@style#1{%
  \ifx#1\displaystyle\scriptstyle\else
    \ifx#1\textstyle\scriptstyle\else
      \scriptscriptstyle
    \fi
  \fi
}
\definecolor{red}{HTML}{ca0020}
\definecolor{lightred}{HTML}{f4a582}
\definecolor{lightblue}{HTML}{92c5de}
\definecolor{green}{HTML}{008837}
\definecolor{blue}{HTML}{2c7bb6}
\title{Accelerated Parallel Tempering \\via Neural Transports}
\author{Leo Zhang$^{1}$\thanks{First Authors. Corresponding to \texttt{<leo.zhang@stx.ox.ac.uk>}} \; Peter Potaptchik$^{1*}$ \; Jiajun He$^{2*}$   \;  Yuanqi Du$^{3}$ \\ \textbf{Arnaud Doucet}$^{1}$ \;  \textbf{Francisco Vargas}$^{4}$ \; \textbf{Hai-Dang Dau}$^{5}$\; \textbf{Saifuddin Syed}$^{6}$ \\ \\
% \thanks{ Use footnote for providing further information about author (webpage, alternative address)---\emph{not} for acknowledging funding agencies.  Funding acknowledgements go at the end of the paper.} \\
$^{1}$University of Oxford, $^{2}$University of Cambridge, $^{3}$Cornell University, \\$^{4}$Xaira Therapeutics, $^{5}$National University of Singapore, $^6$University of British Columbia
}
\begin{document}

\maketitle

\begin{abstract}
Markov Chain Monte Carlo (MCMC) algorithms are essential tools in computational statistics for sampling from unnormalised probability distributions, but can be fragile when targeting high-dimensional, multimodal, or complex target distributions. 
Parallel Tempering (PT) enhances MCMC's sample efficiency through annealing and parallel computation, propagating samples from tractable reference distributions to intractable targets via state swapping across interpolating distributions. 
The effectiveness of PT is limited by the often minimal overlap between adjacent distributions in challenging problems, which requires increasing computational resources to compensate. 
We introduce a framework that accelerates PT by leveraging neural samplers---including normalising flows, diffusion models, and controlled diffusions---to reduce the required overlap. Our approach utilises neural samplers in parallel, circumventing the computational burden of neural samplers while preserving the asymptotic consistency of classical PT. We demonstrate theoretically and empirically on a variety of multimodal sampling problems that our method improves sample quality, reduces the computational cost compared to classical PT, and enables efficient free energy/normalising constant estimation.
\end{abstract}

\section{Introduction}

Sampling from a probability distribution $\pi(x)=\exp(-U(x))/Z$ defined over a state-space $\mathcal{X}$ with a tractable un-normalised density $\tilde{\pi}:\mathcal{X}\to\R$ and intractable normalising constant $Z=\int_\mathcal{X}\tilde{\pi}(x)\ddd x$ is a fundamental task in machine learning and natural sciences. Markov Chain Monte Carlo (MCMC) methods are usually employed for such purposes, constructing an ergodic Markov chain $(X_t)_{t\in\mathbb{N}}$ using local moves leaving $\pi$ invariant. 
While MCMC algorithms are guaranteed to converge asymptotically, in practice, they struggle when the target is complex with multiple well-separated modes \citep{papamarkou2022challenges, henin2022enhanced}.
To handle such cases, \emph{Parallel Tempering} (PT) \citep{PhysRevLett.57.2607,geyer1991markov,hukushima1996exchange} is a popular class of MCMC methods designed to improve the global mixing of locally efficient MCMC algorithms.

PT works by considering an \emph{annealing path} $\pi^0,\pi^1,\dots, \pi^N$ of distributions over $\mathcal{X}$ interpolating between a simple reference distribution $\pi^0=\eta$ (e.g., a Gaussian) %, which we can efficiently sample and evaluate the normalised density, 
and the target $\pi^N=\pi$. PT algorithms construct a Markov chain $\textbf{X}_t=(X_t^0, \ldots, X_t^{N})$ on the extended state-space $\mathcal{X}^{N+1}$, targeting the joint distribution $\bpi=\pi^{0}\otimes\cdots\otimes\pi^{N}$. The PT chain $\mathbf{X}_t$ is constructed by alternating between (1) a \emph{local exploration phase} where the $n$-th chain \footnote{Following the PT literature, we also refer to components of $\mathbf{X}_t$ as chains.} of $\mathbf{X}_t$ is updated according to a $\pi^n$-invariant MCMC algorithm; and (2) a \emph{communication phase} which proposes a sequence of swaps between neighbouring states accepted according to a Metropolis--Hastings correction ensuring invariance (see \Cref{fig:apt_illustrate} (a)). Crucially, PT offsets the additional computation burden of simulating the extended $N$ chains through parallel computation, allowing for a similar effective computational cost as a single chain when implemented in a maximally parallelised manner. 

Typically, the chains $X_t^n$ mix faster when closer to the reference and struggle closer towards the target. Therefore, communication between the reference and target, facilitated through swaps, can induce rapid mixing between modes of the target component \citep{woodard2009conditions,surjanovic2024uniform}. 
While the importance of the swapping mechanism for PT has led to a literature dedicated to optimising communication between the reference and target \citep{syed2021parallel,syed2022non,surjanovic2022parallel}, such works still rely on the original swapping mechanism \citep{geyer1991markov}.

\begin{minipage}{\textwidth}
    \centering
    \begin{minipage}{0.61\textwidth}
        \includegraphics[width=\linewidth]{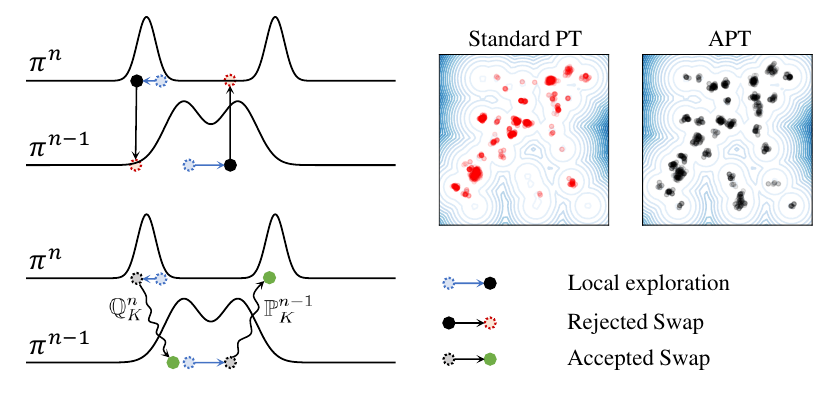}\vspace{-15pt}
    \end{minipage}%
    \hfill
    \begin{minipage}{0.36\textwidth}
        \includegraphics[width=\linewidth]{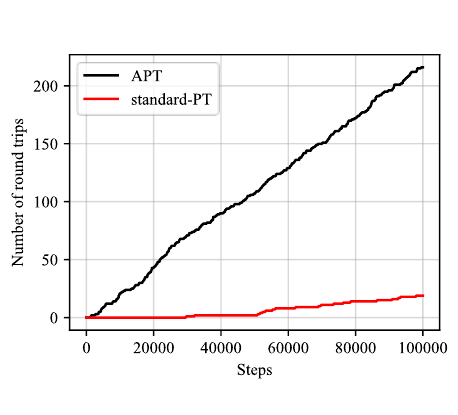}\vspace{-15pt}
    \end{minipage}
    \captionof{figure}{(Left) An illustration of the local exploration and communication step for PT vs APT. (Middle) 1,000 samples of a Gaussian mixture model target obtained using PT vs APT with a standard Gaussian reference. See Appendix \ref{ex:comparison-methods} for more details. (Right) Round trips for PT and APT with $N=6$ chains over $T=100,000$ iterations of Algorithm  \ref{alg:APT}.}
    \label{fig:apt_illustrate}
\end{minipage} 

As an alternative to PT, recent work has explored both continuous \citep{zhang2021path,vargas2023denoising,berner2022optimal,akhound2024iterated,nusken2024transport,mate2023learning,albergo2024nets,erivescontinuously} and  
 discrete \citep{noe2019boltzmann,papamakarios2021normalizing, midgley2022flow,gabrie2022mcmcflows} flows for sampling, under the umbrella of \emph{neural samplers}. However, these methods usually incur a bias, foregoing theoretical guarantees of MCMC, and can be expensive to implement and train. Due to these shortcomings, standard PT provides a strong baseline that neural samplers struggle to match \citep{he2025tricktreatpursuitschallenges}. 

Recent work \citep{arbel2021annealed,albergo2024nets,phillips2024particle,chen2024sequential} has explored approaches to debias neural samplers using Sequential Monte Carlo (SMC)-based ideas \citep{del2006sequential}. Despite their consistency guarantees, these approaches do not address the mode-collapsing nature of many modern neural samplers \citep{he2025tricktreatpursuitschallenges}. PT offers a comparable but computationally dual framework to SMC, where the roles of parallelism and time are reversed \citep{syed2024optimised}. In SMC, particles are generated in parallel, and approximate annealing distributions are constructed sequentially. In contrast, in PT, particles are generated sequentially and annealing distributions are built in parallel. This raises the question: just as neural samplers have been integrated with SMC, can we integrate neural samplers with PT, combining the consistency of PT with the flexibility of neural samplers?

We answer positively this question by formalising and exploiting the framework introduced by \cite{ballard2009replica, ballard2012replica} in physics for designing more flexible swap mechanisms, which we call \emph{Accelerated Parallel Tempering} (APT). 
APT preserves PT's asymptotic consistency and allows us to easily integrate normalising flows, diffusion models and stochastic control into existing PT implementations.
Moreover, APT uses these neural samplers in a \emph{parallelised} manner, mitigating their high computational burden. Empirically, APT outperforms PT by accelerating the communication between the reference and target states, even when controlling for the additional computation incurred by neural samplers.

A relevant prior work integrating PT with normalising flows is \cite{invernizzi2022skipping}, which trains a flow to directly map configurations from the highest-temperature reference to the lowest-temperature target, effectively bypassing the intermediate annealing distributions. \cite{abbott2024practical} also combines PT with normalising flows to accelerate sampling in lattice quantum chromodynamics; their approach is equivalent to the NF-APT scheme of \Cref{subsect:apt_deterministic}. Our framework is strictly more general, accommodating any learned or approximate transport map between neighbouring distributions.
\vspace{-8pt}
\section{Parallel Tempering}
\vspace{-6pt}
Let $\pi^{0},\pi^{1},\dots,\pi^{N}$ be an \emph{annealing path} of probability distributions on $\mathcal{X}$ where $\pi^{0}=\eta$ is the reference and $\pi^{N}=\pi$ is the target. We assume the $n$-th \emph{annealing distribution} admits density $\pi^{n}(x):=\tilde{\pi}^n(x)/Z_n$ with respect to a base measure $\ddd x$, where $\tilde{\pi}^{n}:\mathcal{X}\to\R$ is the un-normalised density which we can evaluate, with normalising constant $Z_n:=\int_\mathcal{X}\tilde{\pi}^n(x)\ddd x$. 
Our goal is to estimate $\pi[f]:=\int_{\mathcal{X}}f(x)\pi(\ddd x)$, the expectation of $f:\mathcal{X}\to\mathbb{R}$ with respect to $\pi=\pi^N$, and the normalising constant $Z=Z_N$.

There is considerable flexibility in choosing annealing distributions provided $\pi^0=\eta$ with $Z_{0}=1$ and $\pi^N=\pi$ with $Z_{N}=Z$. Without loss of generality, we can assume $\pi^{n}=\pi^{\beta_n}$ where for $\beta\in[0,1]$, $\pi^\beta$ continuously interpolates between the reference and target as $\beta$ increases from $0$ to $1$, according to some \emph{annealing schedule} $0=\beta_0<\cdots<\beta_N=1$. 
A common choice is the \emph{geometric path}, $\pi^{\beta}(x) \propto\eta(x)^{1-\beta}\pi(x)^\beta$, which linearly interpolates between reference and target in log-space. See \cite{masrani2021q,syed2021parallel,mate2023learning,york2023modern} for alternative non-geometric annealing paths. 

\vspace{-3pt}
\subsection{Non-Reversible Parallel Tempering}
\vspace{-3pt}
The PT algorithm constructs a Markov chain $\mathbf{X}_t=(X_t^{0}, \ldots, X_t^{N})$ on the extended state-space $\mathcal X^{N+1}$ invariant to the joint distribution $\bpi: = \pi^{0}\otimes\cdots\otimes\pi^{N}$. We construct $\mathbf{X}_t$ from $\mathbf{X}_{t-1}$ by doing (1) a \emph{local exploration} step followed by (2) a \emph{communication} step seen in the top of \Cref{fig:apt_illustrate}~(a). For $n=0,\dots, N$, the $n$-th local exploration move (1) updates the $n$-th component of $\mathbf{X}_{t-1}$ using a $\pi^{n}$-invariant Markov kernel $K^{n}(x,\ddd x')$ on $\mathcal{X}$ corresponding to an MCMC move targeting $\pi^{n}$,
\begin{equation*}\label{eq:local-exploration}
    X^{n}_t\sim K^{n}(X_{t-1}^{n},\ddd x^{n}),
\end{equation*}
We additionally assume that $K^{0}(x,\ddd x')=\eta(\ddd x')$ corresponds to an independent sample from the reference. The communication step (2) applies a sequence of swap moves between adjacent components of $\mathbf{X}_{t}$, where the $n$-th swap move illustrated in \Cref{fig:apt_illustrate} exchanges components $X^{n-1}_t$ and $X^{n}_t$ in $\textbf{X}_t=(X^{0}_t,\dots,X^{N}_t)$ with probability $\alpha^{n}(X^{n-1}_t,X^{n}_t)$, where for $x,x'\in\mathcal{X}$,
\begin{equation}\label{eq:acceptance-PT}
\alpha^{n}(x,x'):=\min\left\{1,\frac{w^n(x')}{w^n(x)}\right\}.
\end{equation}
Where $w^n:\mathcal{X}\to\R$ is the \emph{incremental weight} equal to the un-normalised Radon-Nikodym derivative:
\begin{equation}\label{def:work_classica}
   w^{n}(x)
:=\frac{Z_{n}}{Z_{n-1}}\frac{\ddd\pi^n}{\ddd\pi^{n-1}}(x)
=\frac{\tilde{\pi}^n(x)}{\tilde{\pi}^{n-1}(x)}.
\end{equation}

In practice, it is advantageous to use a \emph{non-reversible} communication \citep{okabe2001replica, syed2022non}, where the $n$-th swap move is proposed only at iterations with matching parity: $ n \equiv t \mod 2$. Both local exploration and communication steps can be done in parallel, allowing distributed implementations to leverage parallel computation to accelerate sampling \citep{surjanovic2023pigeons}.

\paragraph{Round trips.}
While the effective sample size (ESS) of samples generated by a Markov chain is the gold standard for evaluating the performance of MCMC algorithms, in our setting, we are mainly interested in improving the swap kernel within PT. 
As ESS measures the intertwined performance of the local exploration and swap kernels, we instead evaluate the performance of PT and APT in terms of the communication between reference and target, in order to disentangle the influence of the swap kernels from the local exploration.
This is empirically measured by counting the total number of \emph{round trips} $R_T$ which tracks the number of independent reference samples transported to the target after $T$ iterations of PT \citep{katzgraber2006feedback,lingenheil2009efficiency}; see Appendix \ref{app:round-trips} for a formal definition. 
In particular, the mixing time of PT is related to the time it takes for a round trip to occur and the total number of round trips is a measure of the particle diversity generated by PT and strongly correlates with the ESS \citep{surjanovic2024uniform}.

\begin{algorithm}[t]
\caption{Accelerated Parallel Tempering}\label{alg:APT}
\begin{algorithmic}[1]
    \State Initialise $\textbf{X}_0=(X^{0}_0,\dots,X^{N}_0)$;
    \For{$t=1,\dots, T$}
    \State $\textbf{X}_t=(X^{0}_t,\dots, X^{N}_t),\quad X^{n}_t\sim K^{n}(X^{n}_{t-1},\ddd x)$\Comment{Local exploration move}
    \For{$n\equiv t\mod 2$ }\Comment{Non-reversible communication}
    \State $\fwd{X}^{n-1}_{t,0},\bwd{X}^{n}_{t,K}\gets X^{n-1}_t,X^{n}_t$\Comment{Initialise forward/backward paths}
    \For{$k=1,\dots, K$}
    \State 
    $\fwd{X}^{n-1}_{t,k}\sim P_{k}^{n-1}(\fwd{X}^{n-1}_{t,{k-1}},\ddd x)$ \Comment{Accelerate forward}
    \State 
    $\bwd{X}^{n}_{t,K-k}\sim Q_{K-k}^{n}(\bwd{X}^{n}_{t,K-k+1},\ddd x)$ \Comment{Accelerate backward}
    \EndFor
    \State $ \fwd{w}^{n}_{K,t},\bwd{w}^{n}_{K,t}\gets w^{n}_K(\fwd{X}^{n-1}_{t,0:K}), w^{n}_K(\bwd{X}^{n}_{t,0:K})$\Comment{Work of forward/backward paths}
    \State $U\sim \mathrm{Uniform}([0,1])$
    \If{$\log U< \log \fwd{w}^{n}_{K,t}-\log \bwd{w}^{n}_{K,t}$}  \Comment{Accelerated swap move}
    %\If{$U< \frac{\fwd{w}^{n}_{K,t}}{\bwd{w}^{n}_{K,t}}$} \Comment{Accelerated swap move}
    \State $X^{n-1}_t,X^{n}_t\gets \bwd{X}^{n}_{t,0}, \fwd{X}^{n-1}_{t,K}$  
    \EndIf
    \EndFor
\EndFor
\Ensure \textbf{Return:} $\textbf{X}_1,\dots,\textbf{X}_T$
\end{algorithmic}
\end{algorithm}\vspace{-5pt}
\vspace{-3pt}
\section{Accelerated Parallel Tempering}
\label{sect:dpt}
\vspace{-3pt}
% \saif{max 1.5 pages}
% \XXX{Update intro}
A limitation of PT is its inflexible swap move, which only proposes directly exchangeable samples between distributions, considering just the relative change in likelihood $\pi^{n-1}$ and $\pi^{n}$. Low acceptance probability occurs when these distributions have minimal overlap. Addressing this requires increasing the number of parallel chains $N$ which may not always be possible. 
We propose \emph{Accelerated Parallel Tempering} (APT), expanding the framework developed in \cite{ballard2009replica, ballard2012replica}, to improve distributional overlap and accelerate communication for a fixed number of chains.

\vspace{-4pt}
\subsection{Forward and Backward Accelerators}
\vspace{-4pt}
For $n=1,\dots, N$, let $P_k^{n-1}(x_{k-1}, \ddd x_k)$ and $Q_{k-1}^n(x_k, \ddd x_{k-1})$, $k=1, \dots, K$, be two families of $K$ transition kernels which we call \emph{forward accelerators} and \emph{backward accelerators}.
They induce $\sP^{n-1}_K$ and $\sQ^n_K$, two time-inhomogeneous Markov processes obtained by propagating $\pi^{n-1}$ forward in time and $\pi^n$ backward in time respectively using the forward and the backward accelerators:
\begin{align*}
    \sP^{n-1}_K(\ddd x_{0:K})\!:=\!\pi^{n-1}(\ddd x_0)\!\prod_{k=1}^K\!P^{n-1}_k(x_{k-1},\ddd x_k),\sQ^n_K(\ddd x_{0:K})\!:=\!\pi^{n}(\ddd x_K)\!\prod_{k=1}^K\!Q^{n}_{k-1}(x_{k},\ddd x_{k-1}).
\end{align*}
We assume $\sP^{n-1}_K$ and $\sQ^{n}_K$ are mutually absolutely continuous and we can evaluate $w^{n}_K:\mathcal{X}^{K+1}\to \R$, the incremental weights between the forward and the backward paths, extending $w^n$ in \Cref{def:work_classica},
\par
\begin{equation}
\label{def:work_apt}
w^{n}_K(x_{0:K}):=\frac{Z_{n}}{Z_{n-1}}\frac{\ddd{\sQ}^{n}_K}{\ddd{\sP}^{n-1}_K}(x_{0:K}).
\end{equation}

\subsection{Non-Reversible Accelerated Parallel Tempering}\label{subsec:non-reversible_apt}
We construct the Markov chain $\textbf{X}_{t}=(X^{0}_{t},\dots, X^{N}_t)$ for $t=1,\dots, T$ using the same local exploration and non-reversible communication as classical PT, but we use the accelerated PT swap move as described below and summarized in \Cref{alg:APT}. 
Given PT state $\textbf{X}_t=(X^{0}_t,\dots,X^{N}_t)\in\mathcal{X}^{N+1}$ after the local exploration move, we define the $n$-th accelerated swap move as follows: generate paths $\fwd{X}^{n-1}_{t,0:K}$, and $\bwd{X}^{n}_{t,0:K}$ obtained by propagating $X^{n-1}_t$ and $X^{n}_t$ forward and backward in time using the forward and backward transitions respectively,
\begin{align*}
    \fwd{X}^{n-1}_{t,0}=X^{n-1}_t,\quad \fwd{X}_{t,k}^{n-1}\sim P^{n-1}_{k}(\fwd{X}^{n-1}_{t,k-1},\ddd x_{k}),\\
    \bwd{X}^{n}_{t,K}=X^{n}_t,\quad \bwd{X}_{t,k-1}^{n}\sim Q^{n}_{k-1}(\bwd{X}^{n}_{t,k},\ddd x_{k-1}).
\end{align*}
The $n$-th accelerated swap move illustrated in \Cref{fig:apt_illustrate} replaces components $X^{n-1}_t$ and $X^{n}_t$ in $\textbf{X}_t$ with $\bwd{X}^{n}_{t,0}$ and $\fwd{X}^{n-1}_{t,K}$ respectively with probability $\alpha^{n}_K(\fwd{X}^{n-1}_{t,0:K},\bwd{X}^{n}_{t,0:K})$, where for $x_{0:K},x_{0:K}'\in \mathcal{X}^{K+1}$,
\begin{equation*}
\label{eq:swap_proba_gen}
\alpha_K^{n}(x_{0:K},x_{0:K}')=\min\left\{1,\frac{w^n_K(x'_{0:K})}{w^n_K(x_{0:K})}\right\}.
\end{equation*}
%For the proof, see Appendix \ref{proof:validity_swap_gen}.
When $K=0$, the accelerated swap coincides with the traditional PT swap in \Cref{eq:acceptance-PT}. However, an accelerated swap can obtain an acceptance of $1$ even if $\pi^{n-1}\neq \pi^{n}$ provided $\sP^{n-1}_K = \sQ^{n}_K$. Therefore, we aim to choose the forward and backward accelerators to make the laws of simulated forward and backward paths as close to each other as possible. 
\Cref{thm:validity_swap_gen} shows we can quantify this discrepancy through the \emph{(theoretical) rejection rate}, $r(\sP^{n-1}_K,\sQ^n_K):=\|\sP^{n-1}_K\otimes \sQ^n_K-\sQ^{n}_K\otimes \sP^{n-1}_K\|_{\mathrm{TV}}$ which by Pinsker inequality is controlled by the symmetric KL divergence between $\sP^{n-1}_K$ and $\sQ^n_K$ (for the proof, see Appendix \ref{proof:validity_swap_gen}):
\begin{equation}\label{eq:SKL-bound}
      r(\sP^{n-1}_K,\sQ^n_K)^2\leq \frac{1}{2}\sP^{n-1}_K[-\log w^n_K]+\frac{1}{2}\sQ^n_K[\log w^n_K]=:\mathrm{SKL}(\sP^{n-1}_K,\sQ^n_K).\footnote{We note that $[\cdot]$ represents expectation with respect to the measure on the left.}
\end{equation}

\begin{theorem}
	\label{thm:validity_swap_gen}
 The APT Markov chain $\textbf{X}_t$ generated by \Cref{alg:APT} is ergodic and $\bpi$-invariant. Moreover, the probability the $n$-th accelerated swap is rejected at stationarity equals $r(\sP^{n-1}_K,\sQ^n_K)$.
\end{theorem}

\paragraph{Expectation and free energy estimators.}

As a by-product of \Cref{alg:APT}, we obtain a consistent estimator $\hat{\pi}^n_T[f]=\frac{1}{T}\sum_{t=1}^T f(X^{n}_t)$ for the expectation of $f:\mathcal{X}\to\mathbb{R}$ with respect to $\pi^n$ by taking the Monte Carlo average over the $n$-th chain $X^n_t$.
To also obtain free energy estimates, we can average over $\fwd{w}^{n}_{K,t}:= w^{n}_{K}(\fwd{X}^{n-1}_{t,0:K})$ and $ \bwd{w}^{n}_{K,t}:=w^{n}_K(\bwd{X}^{n}_{t,0:K})$, the weights of the forward and backward accelerated paths generated during the $n$-th accelerated swap at time $t$ during the communication phase of \Cref{alg:APT}, to obtain consistent estimators $\fwd{Z}_T$ and $\bwd{Z}_T$ for $Z$ respectively as $T\to\infty$,
\[
\fwd{Z}_T:=\prod_{n=1}^N\frac{2}{T}\sum_{n\equiv t \mod 2} \fwd{w}^{n}_{K,t},\quad \frac{1}{\bwd{Z}_T}:=\!\prod_{n=1}^N\frac{2}{T}\sum_{n\equiv t \mod 2} \frac{1}{\bwd{w}^{n}_{K,t}}.
\]
We then obtain a consistent estimator $ \hat{Z}_T=(\fwd{Z}_T\bwd{Z}_T)^{1/2}$ for $Z$ by taking the geometric mean of the forward and backward estimators.
Proposition \ref{prop:estimators} provides the formal statement of these results; for the proof, see Appendix \ref{proof:estimators}.
\begin{proposition}	\label{prop:estimators}
The estimators $\hat{\pi}^n_T[f]$ and $\hat{Z}_T$ a.s. converge to $\pi^{n}[f]$ and $Z$ respectively as $T\to\infty$. Moreover, if $\sP^{n-1}_K=\sQ^{n}_K$ for all $n$, then $\hat{Z}_{T}\stackrel{a.s.}{=}Z$.
\end{proposition}
When we consider the classical PT with $K=0$, these estimators degrade back to free energy perturbation (FEP) \citep{zwanzig1954high}; when the path is deterministic (e.g., implemented through normalising flow), these estimators recover the target FEP  \citep{jarzynski2002targeted}; and when the path is stochastic, the estimators can be viewed as a form of Jarzynski equality \citep{jarzynski1997nonequilibrium} or escorted Jarzynski equality \citep{vaikuntanathan2008escorted}. 
Additionally, following  \citet[][Proposition 3.2]{he2025feat}, we can also apply the Bennett acceptance ratio using the weight for the paths \citep{BENNETT1976245,shirts2003equilibrium,PhysRevE.80.031111,minh2009optimal,vaikuntanathan2011escorted} to achieve reduced variance without additional functional calls.

\section{Analysis of Accelerated PT}
% \saif{About 1 page}
% \paragraph{Compute-normalised round-trip rate}
We analyse how the performance, measured by the round trips $R_T$ observed after $T$ iterations, scales with increasing parallel chains $N$ and acceleration time $K$. 
This is equivalent to analysing the \emph{round trip rate} $\tau:=\lim_{T\to\infty}\E[R_T]/T$ defined as the expected percentage of PT iterations where a round trip occurs \citep{katzgraber2006feedback, lingenheil2009efficiency}.
We use the slope in \Cref{fig:apt_illustrate}~(Right) to illustrate this quantity.
To derive theoretical insights independent of the problem-specific local exploration moves, we employ an \emph{efficient local exploration} assumption analogous to \cite{syed2021parallel,syed2022non,surjanovic2024uniform}, which assumes the weight of forward and backward accelerated paths is independent across chains and PT iterations.
\begin{assumption}[Efficient local exploration]\label{assump:ELE}
For all $n=1,\dots,N$, $(\fwd{w}^n_{K,t},\bwd{w}^n_{K,t})$ are iid in $t$ and equal in distribution to $(w^n_K(\fwd{X}^{n-1}_{0:K}),w^n_K(\bwd{X}^n_{0:K}))$ where $(\fwd{X}^{n-1}_{0:K},\bwd{X}^n_{0:K})\sim \sP^{n-1}_K\otimes \sQ^n_K$.
\end{assumption}
\Cref{prop:rtr-formula} relates the round trip rate to the rejections, extending \citet[Corollary 2]{syed2021parallel}; for the proof, see Appendix \ref{proof:rtr-formula}.
\begin{proposition}\label{prop:rtr-formula}
If \Cref{assump:ELE} holds, then $\tau=\tau(\sP^{0:N-1}_K,\sQ^{1:N}_K)$ where,
\[
\tau(\sP^{0:N-1}_K,\sQ^{1:N}_K):=\left(2+2\sum_{n=1}^N\frac{r(\sP^{n-1}_K,\sQ^{n}_K)}{1-r(\sP^{n-1}_K,\sQ^{n}_K)}\right)^{-1}.
\]
% where $\bar{\sP}_K:=(\sP^{0}_K,\dots,\sP^{N-1}_K)$ and $\bar{\sQ_K}:=(\sQ^1_K,\dots,\sQ^N_K)$.
\end{proposition}

\paragraph{Asymptotic scaling with acceleration time.}
We now fix $N$ and use \Cref{prop:rtr-formula} to study how the round trip rate scales with $K$ in Proposition \ref{prop:scaling-K}; for the proof, see Appendix \ref{apx:scale_K}.
We focus on a special case where $\sP_K^{n-1}$ and $\sQ_K^n$ arise as $K$-step discretisations of an underlying stochastic differential equation (SDE) which bridge $\pi^{n-1}$ and $\pi^n$ with path measure $\sP^{n-1}_\infty$ and $\sQ^n_{\infty}$ as $K\to\infty$.
\begin{proposition}\label{prop:scaling-K}
Under appropriate conditions on the drifts of the SDE (Appendix \ref{apx:scale_K}), as $K\to\infty$, $\tau(\sP^{0:N-1}_K,\sQ^{1:N}_K)$ converges to $\tau(\sP^{0:N-1}_\infty,\sQ^{1:N}_\infty)$ and  $r(\sP_K^{n-1}, \sQ_K^{n}) \leq r(\sP^{n-1}_\infty,\sQ^n_\infty) + \mathcal O(1/\sqrt{K})$.
% \begin{equation*}
% \lim_{K\to\infty}\tau(\sP^{0:N-1}_K,\sQ^{1:N}_K)=\tau(\sP^{0:N-1}_\infty,\sQ^{1:N}_\infty).
% \end{equation*}
% \[ r(\sP_K^{n-1}, \sQ_K^{n}) \leq r(\sP^{n-1}_\infty,\sQ^n_\infty) + \mathcal O(\frac{1}{\sqrt K}).
% \]
\end{proposition}
The main appeal of APT comes from the fact that 
for well-designed $\sP^{0:N-1}_K,\sQ^{1:N}_K$, that aim to induce approximately the same measure, we have 
%good neural network approximations 
\[r(\sP^{n-1}_\infty,\sQ^n_\infty) \approx 0 \ll r(\pi^{n-1}, \pi^n).\]
Thus, increasing $K$ can dramatically reduce the rejection rate and increase the round trip rate in challenging scenarios.
In practice, we find that for moderate values of $K$, this benefit outweights the increased computational cost as seen in Section \ref{ex:comparison-methods}.

\paragraph{Asymptotic scaling with increased parallelism.}

\looseness=-1
Following the analysis of PT in \cite{syed2022non}, we consider how the performance of APT scales as $N\to\infty$.
In particular, \Cref{thm:barrier} shows that as $N\to\infty$, the round trip rate is controlled by the \emph{global barrier} $\Lambda_K$ for APT and this can be estimated by $\sum_{n=1}^N r(\sP^{n-1}_K,\sQ^{n}_K)$ (this can be empirically estimated by approximating $r(\sP^{n-1}_K,\sQ^{n}_K)$ by the empirical rejection rates observed from Algorithm~\ref{alg:APT}); for the proof, see Appendix \ref{proof:barrier}.
This provides a direct generalisation of the analysis of the global barrier for PT from \cite{syed2022non}.
\begin{theorem}\label{thm:barrier}
Suppose $\sP^{\beta,\beta'}_K$ and $\sQ^{\beta,\beta'}_K$ are sufficiently regular and satisfy Assumptions 2--4 in Appendix \ref{app:scale-N}. If $\max_{n\leq N} |\beta_n-\beta_{n-1}|=O(N^{-1})$ as $N\to\infty$, then $\sum_{n=1}^Nr(\sP^{n-1}_K,\sQ^{n}_K)$ converges to $\Lambda_K$ and $\tau(\sP^{0:N-1}_K,\sQ^{1:N}_K)$ converges to $\bar{\tau}_K=(2+2\Lambda_K)^{-1}$, where $\Lambda_K$ equals,
\[
\Lambda_K:=\int_0^1\frac{1}{2}\E[|\dot{w}^\beta_K(\bwd{X}^\beta_{0:K})-\dot{w}^\beta_K(\fwd{X}^\beta_{0:K})|]\ddd\beta,\quad (\fwd{X}_{0:K}^\beta\bwd{X}^\beta_{0:K})\sim \sP^{\beta,\beta}_K\otimes \sQ^{\beta,\beta}_K,
\] 
and  $\dot{w}^\beta_K:\mathcal{X}^{K+1}\to \R$ is the partial derivative with respect to $\beta'$ of $\log w^{\beta,\beta'}_K$ at $\beta'=\beta$.
\end{theorem}
\looseness=-1
Notably, APT observes a sharp deterioration in round trip when $N\ll\Lambda_K$, begins to stabilise when $N\approx \Lambda_K$ and observes a marginal improvement when $N\gg\Lambda_K$.
Intuitively, we can consider $\Lambda_K$ to be a statistical invariant of APT which quantifies the difficulty of the sampling problem.
As discussed in \Cref{prop:scaling-K}, for a reasonable choice of accelerators, we expect $\Lambda_K$ to decrease with $K$.
This suggests APT can substantially improve compared to classical PT ($K=0$) for challenging problems by having $\Lambda_K<\Lambda_0$.  %$N\approx \Lambda_0>\Lambda_K$.

\paragraph{Schedule tuning.}

Another important consequence of Theorem \ref{thm:barrier} is that this allows us to import the schedule tuning algorithm from \cite[Section 5]{syed2022non} by directly following the same reasoning.
This provides a simple and robust method for optimising the annealing schedule $\{\beta_n\}_{n=0}^N$ which is one of the most important hyper-parameters impacting the performance of APT.
We provide further details in Appendix \ref{app:tune}.

\paragraph{Additional analysis.}
Finally, we provide a theoretical equivalence of how APT can be viewed as an instance of PT and explore the trade-off between increasing parallelism and increasing acceleration time in Appendix \ref{app:apt_is_pt}.

\section{Design Space for Accelerated PT}\label{sec:design-space}

\looseness=-1
In this section, we explore the design space of APT where $\mathcal{X} = \R^d$.
We provide several variants of APT algorithms which utilise different neural transports in order to improve the round trip rate of APT.

\subsection{Normalising Flow Accelerated PT}\label{subsect:apt_deterministic}

Normalising flows \citep{tabak2010density, rezende2016variationalinferencenormalizingflows} provide a flexible framework for approximating complex probability distributions by transforming a simple base distribution (e.g., a Gaussian) through a sequence of invertible, differentiable mappings. 

Let $T^n:\mathcal X\to \mathcal X$ be such a diffeomorphism. 
We define NF-APT as APT with forward and backward accelerators defined by the deterministic transport given by the normalising flow $T^{n}$ and its inverse respectively,
\[
P_1^{n-1}(x_{0},\ddd x_1)=\delta_{T^n(x_{0})}(\ddd x_1),\quad Q_{0}^{n}(x_1,\ddd x_{0})=\delta_{(T^n)^{-1}(x_1)}(\ddd x_{0}).
\]
Then the incremental weight equals
\[
%W^{n}_1(x_{0},x_1) =U^{n}(x_1)-U^{n-1}(x_0)-\log|\det \nabla T^{n}(x_0)|,\quad x_1=T^n(x_{0}).
w_1^n(x_0, x_1) = \frac{\tilde{\pi}^n(x_1)}{\tilde{\pi}^{n-1}(x_0)}|\det J_{T^{n}}(x_0)|, \quad x_1 = T^n(x_0).
\]
We can parametrise the normalising flow $T^n$ by a neural network which is trained to minimise $\mathcal{L}(T) = \sum_{n=1}^N\mathrm{SKL}(\sP^{n-1}_1,\sQ^n_1)$ following \Cref{eq:SKL-bound}; see Appendix \ref{app:flow-apt} for further details.
We note that NF-APT is partly motivated by prior work \citep{arbel2021annealed, matthews2023continualrepeatedannealedflow} which utilise normalising flows to map between annealing distributions in the context of SMC samplers \citep{del2006sequential}. Interestingly, NF-APT is able to mitigate mode dropping issues associated with these works, as we are able to train with the symmetric KL divergence, since NF-APT provides access to samples from both $\pi^{n-1}$ and $\pi^n$.

\subsection{Controlled Monte Carlo Diffusions Accelerated PT}
\label{sec:cmcd}
\looseness=-1
Another common transport for sampling is based on the escorted Jarzynski equality \citep{vaikuntanathan2008escorted}. 
One way to realise this in practice with neural networks is via Controlled Monte Carlo Diffusions (CMCD) \citep{nusken2024transport}. 

Suppose for $s \in[0,1]$, we have $b^n_s\in\R^d$, $\sigma^n_s\in \R_+$, and $U^n_s = (1-\phi_s^n)\log\tilde{\pi}^{n-1} + \phi_s^n\log\tilde{\pi}^n$, where $\phi_s^n\in[0,1]$ is monotonically increasing in $s$ and $\phi^n_0=0, \phi^n_1=1$.
For a fixed $K$, let $s_k=k/K$ and $\Delta s_k=1/K$ be the uniform discretisation of the unit interval. We define CMCD-APT as APT with $k$-th forward accelerator,
\[
    P^{n-1}_k(x_{k-1},\ddd x_{k})=\mathcal{N}(x_{k-1}+\Delta s_k(\sigma^n_{s_{k-1}})^2\nabla U^n_{s_{k-1}}(x_{k-1}) + \Delta s_k b^n_{s_{k-1}}(x_{k-1}),2\Delta s_k(\sigma^n_{s_{k-1}})^2 \mathbf{I}),
\]
and the backward accelerator,
\[
Q^{n}_{k-1}(x_{k},\ddd x_{k-1})=\mathcal{N}(x_{k} - \Delta s_k(\sigma_{s_{k}}^n)^2\nabla U^n_{s_{k}}(x_{k}) + \Delta s_k b^n_{s_{k}}(x_{k}),2\Delta s_k(\sigma_{s_{k}}^n)^2 \mathbf{I}).
\]
The corresponding incremental weights for CMCD-APT equals
\begin{equation}\label{eq:work-diffusion}
w^{n}_K(x_{0:K})
=\frac{\tilde{\pi}^n(x_K)\prod_{k=1}^K Q^n_{k-1}(x_k,x_{k-1})}{\tilde{\pi}^{n-1}(x_0)\prod_{k=1}^K P^{n-1}_{k}(x_{k-1},x_{k})}
\end{equation}
where $P^{n-1}_k(x_{k-1},x_k)$ and $Q^n_{k-1}(x_{k},x_{k-1})$ are the Gaussian densities of the forward and backwards transition kernels with respect to the Lebesgue measure.

We parametrise and learn the neural transport $b^n_s$, the time schedule $\phi^n_s$ and the diffusion coefficient $\sigma^n_s$ via the objective: $\mathcal{L}(b_s, \phi_s, \sigma_s) =\sum_{n=1}^N\mathrm{SKL}(\sP_K^{n-1},\sQ^n_K)$,
where we also learn $\phi_s^n, \sigma^n_s$ following \citet{geffner2023langevin}\footnote{We note that in standard CMCD, we cannot use the symmetric KL divergence in the objective as standard CMCD only has access to data from one side.}.
While in theory, once the vectorised field $b_s^n$ is perfectly learned, any $\sigma^n_s$ can be used, we found that learning $\sigma^n_s$ can significantly stabilise the training and enhance performance.
We note that alternative divergences and losses can also be employed---see \cite{mate2023learning,richter2023improved,albergo2024nets}.
We discuss the form of the transition kernels, the incremental weights and objective in the limit as $K\to\infty$ in Appendix \ref{app:cmcd-apt}.

\subsection{Diffusion Accelerated PT}
\label{subsect:diffusion_swaps}

%\looseness=-1
The Variance-Preserving (VP) diffusion model \citep{ho2020denoising,songscore} is defined by the (forward) SDE: $dY_s = -\gamma_sY_s \mathrm{d}s + \sqrt{2\gamma_s} \mathrm{d}{W}_s$ with $s \in [0,1]$, $Y_0\sim \pi$, and a rate function $\gamma_s:[0, 1]\to\R^+$.
The time-reversal SDE $(X_s)_{s\in[0, 1]}=(Y_{1-s})_{s\in[0, 1]}$ has the form $dX_s = [\gamma_{1-s}X_s + 2\gamma_{1-s} \nabla\log \pi_{s}^{\text{VP}}(X_s)]\ddd s + \sqrt{2\gamma_{1-s}} \ddd {W}_s$ where $\pi_s^{\text{VP}}$ is the density of $Y_{1-s}$ and $X_0\sim \pi_0^{\text{VP}}$ is close to a standard Gaussian in the common scenario that $\int_0^1 \gamma_s \ddd s \gg 1$.
Samples from $\pi$ can then be generated by simulating the time-reversal SDE initialised with samples from $\mathcal{N}(0, \mathbf{I})$; the score $\nabla\log \pi_s^{\text{VP}}$ is unknown but can be learned by score matching objectives.

To employ diffusion models for APT, we consider parametrising an energy-based model \citep{salimans2021should} $\pi_s^\theta$ which satisfies the boundary conditions: $\pi^{\theta}_0 = \gN(0, \mathbf{I})$ and $\pi^{\theta}_1 = \pi$, and which we train to approximate $\pi_s^\text{VP}$.
We use $\pi_s^\theta$ to define our annealing path $\pi^n$ by $\pi^n=\pi_{s_n}^\theta$ for some choice of schedule $0=s_0<\ldots<s_N=1$.
Finally, for a fixed $K$, we define Diff-APT as APT with forward and backward  accelerator respectively given by some $K$-step integrator of the time-reversal SDE (where we substitute $\pi_s^\theta$ for $\pi_s^\text{VP}$) and the forward SDE between the times $s_{n-1}$ and $s_n$; for further details, see Appendix \ref{app:diff-apt}.

To provide a concrete example of $P_k^{n-1}$ and $Q_{k-1}^n$, consider the Euler-Maruyama discretisation with step-size $\delta_n = (s_{n} - s_{n-1})/K$ and interpolating times $s_{n}^k = s_{n-1} + k\delta_n$. 
The forward and backward accelerators are given by:
\begin{align*}
%\label{eq:diff-transition-fwd}
 P_k^{n-1}(x_{k-1}, \ddd x_k) &= \mathcal N(x_{k-1} + \delta_n \gamma_{1-s_{n}^{k-1}}[ x_{k-1} + 2 \nabla \log \pi^\theta_{s_{n}^{k-1}}(x_{k-1})], 2\delta_n\gamma_{1-s_n^{k-1}}\mathbf{I}) \\
 %\label{eq:diff-transition-bwd}
 Q_{k-1}^{n}(x_k, \ddd x_{k-1}) &= \mathcal N(x_k - \delta_n \gamma_{1-s_{n,k}} x_k, 2 \delta_n \gamma_{1-s_{n,k}}\mathbf{I}),
\end{align*}
and the resulting formula for the incremental weights $w^n_K$ is the same as \Cref{eq:work-diffusion}.

Regarding training, as we initially do not have access to samples from $\pi$, we iteratively train, using the score matching objective, on approximate samples from $\pi$ obtained by sampling from Diff-APT.
Furthermore, we stress that a parallelised implementation of Diff-APT allows for generating a sample from $\pi$ with a similar effective cost as $K$ discretisation steps solving the time-reversal SDE.

\section{Experiments}

In this section, we evaluate our proposed variants of APT on a variety of targets.
We defer further experimental details to Appendix \ref{app:exp_details}.

\subsection{Comparison of Acceleration Methods}\label{ex:comparison-methods}

\begin{table}[t]
\centering
\caption{PT versus APT with different acceleration methods, targeting a 40-mode Gaussian Mixture model (GMM-10) target in 10 dimensions and standard Gaussian reference using $N=6,10,30$ parallel chains for $T=100,000$ iterations. For each method, we report the round trips (R), round trips per target evaluation, denoted as compute-normalised round trips (CN-R), the number of neural network evaluations per parallel chain every iteration (Neural Calls), and $\Lambda_K$ estimated using $N=30$ chains $(\hat{\Lambda}_K)$ .}\label{tab:gmm_all}
\resizebox{\textwidth}{!}{%
\begin{tabular}{lccccccccc@{}}
\toprule
\# Chain        &      &      & \multicolumn{2}{c}{$N=6$}        & \multicolumn{2}{c}{$N=10$}      & \multicolumn{2}{c}{$N=30$} \\ \cmidrule(r){4-5} \cmidrule(r){6-7} \cmidrule(r){8-9} 
Method         & Neural Calls $(\downarrow)$ & $\hat{\Lambda}_K$ $(\downarrow)$  & R $(\uparrow)$   & CN-R $(\uparrow)$ & R $(\uparrow)$    & CN-R $(\uparrow)$  & R $(\uparrow)$    & CN-R $(\uparrow)$   \\ \midrule
NF-APT         & 1           &   7.198   &    194   &   97.0   &    1655    &   827.5    &    2441    &   1220.5    \\ \midrule
CMCD-APT ($K=1$) & 2     & 6.911      &  234   & 117.0    & 2126 & 1063.0 & 3264  & \textbf{1632.0}  \\
CMCD-APT ($K=2$) & 3     & 5.932      & 526  & 175.3 & 3287 & \textbf{1092.7} & 4767  & 1589.0  \\
CMCD-APT ($K=5$) & 6     & \textbf{4.822}      & \textbf{1743} & \textbf{290.5} &   \textbf{5525}   &     920.8   & \textbf{6231 } & 1038.5  \\ \midrule
Diff-APT ($K=1$) & 2     & 9.025      & 375   &  187.5     & 1551 & 775.5  & 2820  & 1410.0  \\
Diff-APT ($K=2$) & 3     & 7.298      & 748  & 249.3 & 2064 & 688.0  & 3480  & 1160.0  \\
Diff-APT ($K=5$) & 6     & 5.795      &  1565    &  260.8     & 3080 & 513.3  & 4334  & 722.3   \\ \midrule
Diff-PT ($K=0$)        & 2       & 8.932      &   204   &  102.0     & 734  & 367.0  & 1586  & 793.0   \\ \midrule
PT             & \textbf{0}       & 8.346      & 17   & 8.5   & 681  & 340.5  & 1888  & 944.0   \\ \bottomrule
\end{tabular}%
}
\label{tab:round-trip-APT-vs-PT}\vspace{-10pt}
\end{table}

\looseness=-1
We evaluate the three variants of APT introduced in \Cref{sec:design-space}: NF-APT, CMCD-APT and Diff-APT, against the baseline of classical PT using the geometric path with reference $\eta=\gN(0, \mathbf{I})$, on a 40-mode Gaussian mixture model in 10 dimensions (GMM-$10$) \citep{midgley2022flow}.
\Cref{tab:gmm_all} compares the performance of these methods in terms of round trips achieved and, in order to control for the additional computational cost required by APT, their \emph{compute-normalised round trips}; for further details, see Appendix \ref{app:compare-acc}.
We choose to evaluate the different methods for $N=6,10,30$ chains to cover\footnote{We refer to the discussion in Section 5 and 6.3 of \cite{syed2024optimised} for this classification of regimes.}: (1) the smallest $N$ where PT is able to obtain round trips (2) the regime $N\approx \Lambda$ where PT is stable ($\Lambda$ is the global barrier for PT), and (3) the regime where $N\gg \Lambda$ and PT is close to optimal.

\looseness=-1
We see all three APT variants substantially improve over PT with increased round trips as $N$ and $K$ increase, and even when controlling for the additional computation required by APT.
For instance, in the small $N$ regime, (1) resulted in a 10x to 100x improvement and highlights the potential of APT for challenging problems where PT struggles and parallel chains are constrained. By contrast, when $N\gg\Lambda$ and the improvements over standard PT are less dramatic, they might not offset the cost of additional neural evaluations. 
Notably, CMCD with $K=5$ and $N=30$ outperforms the theoretical limit $T/(2+2\Lambda)\approx 5,349$ of round trips for classical PT (when $N\to\infty$) predicted by \Cref{thm:barrier}.

\begin{wrapfigure}{r}{7.6cm} 
 \centering
 \vspace{-15pt}
    \includegraphics[width=\linewidth]{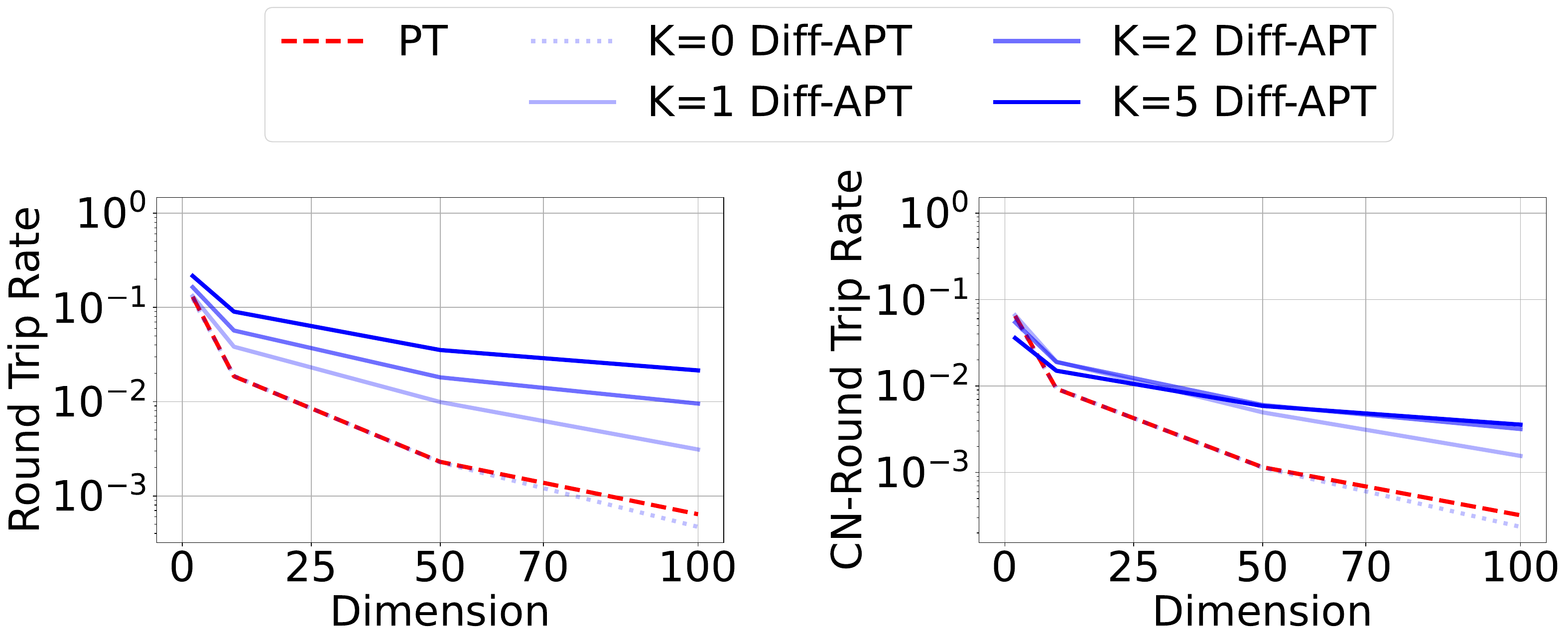}
    \caption{Round trip metrics for $K$-step Diff-APT ($K=1, 2, 5$) and Diff-PT using the true diffusion path, and PT targeting GMM-$d$ for $d=2, 10, 50, 100$ when using 30 chains. (Left) Round trip rate against $d$. (Right) Compute-normalised round trip rate against $d$.}
    \label{fig:theoretical-diff-apt} 
    \vspace{-15pt}
\end{wrapfigure}

% \vspace{-4pt}
\subsection{Scaling with Dimensions}
% \vspace{-4pt}
To understand the theoretical performance of $K$-step APT
for a fixed number of chains when we scale the dimension $d$, we use the fact that the path of distributions $(\pi^\text{VP}_s)_{s\in[0, 1]}$ induced by a VP diffusion process initialised at GMM-$d$ is analytically tractable, in order to run $K$-step Diff-APT with the true diffusion path. 
We compare this against the same PT setup as previously.
In \Cref{fig:theoretical-diff-apt}, we plot the resulting round trip rate and compute-normalised round trip rate of the different algorithms when we fix $N=30$; for further details, see Appendix \ref{app:dim-scale}.
% \looseness=-1
For all values of $d$, the round trip rate of Diff-APT monotonically increases over PT as we increase $K$, with greater gains for larger values of $d$, demonstrating that the accelerated swap improves the communication of states over the standard PT swap. We see a substantial improvement in round trip rate (right) with acceleration compared to PT ($K=0$) and a minor difference between algorithms as $K$ increases when normalised for compute (Right), suggesting the extra computation for APT is justified.

\vspace{-5pt}
\subsection{Log-Normalising Constant (Free-Energy) Estimation}\label{sec:free-energy}

% \vspace{-1mm}
\looseness=-1
As we described in \Cref{subsec:non-reversible_apt}, a by-product of \Cref{alg:APT} is the estimation of change in free energy $\Delta F=-\log Z$. 
We compare free energy estimates from CMCD-APT and Diff-APT against PT on two targets: DoubleWell-4 (DW-4), a particle system in Cartesian coordinates; and ManyWell-32 (MW-32), a highly multi-modal density introduced by \cite{midgley2022flow}.
\Cref{fig:dw4-manywell-log-z} presents box-plots of $30$ free energy estimates for DW-4 and MW-32 using 1,000 samples each; for further details, see Appendix \ref{app:free-energy-est}.

\begin{figure}[h!]
    \centering
    \vspace{-1mm}
    \includegraphics[width=1\linewidth]{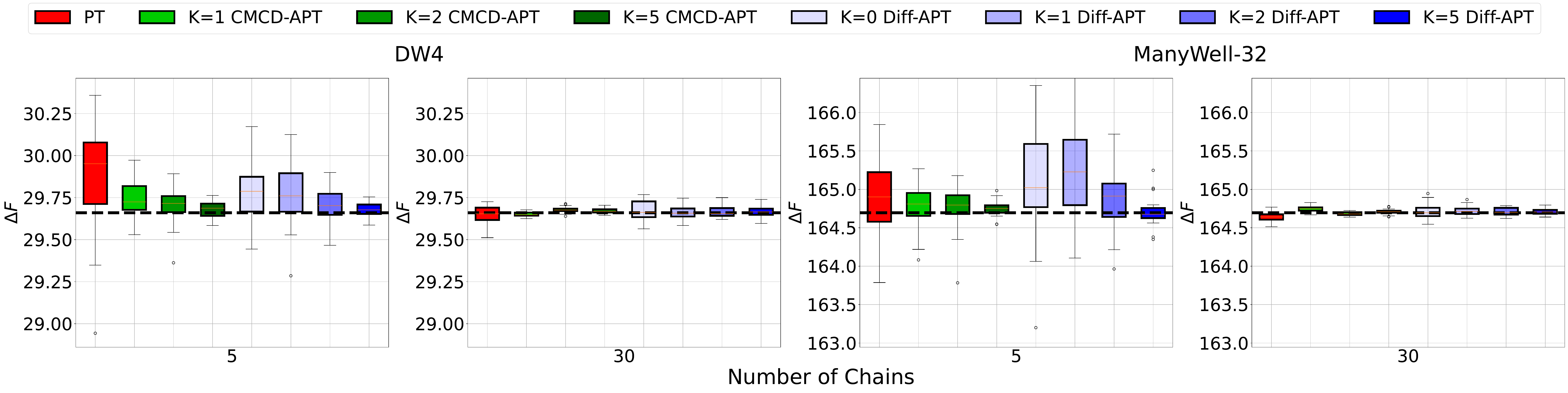}
    \caption{Estimates of $\Delta F$ for DW4 and ManyWell-32 by PT, CMCD-APT ($K=1, 2, 5$) and Diff-APT ($K=0, 1, 2, 5$) using 1,000 samples. Each box consists of 30 estimates. The black dashed lines denote the reference constant $\Delta F\approx 29.660$ estimated with PT using 60 chains and 100,000 samples and $\Delta F \approx 164.696$ from \citet{midgley2022flow} for ManyWell-32.}
    \label{fig:dw4-manywell-log-z}
    %\vspace{-5mm}
\end{figure}

Several key observations emerge:
(1)  Both CMCD-APT and Diff-APT exhibit markedly lower variance and bias than PT across both targets.
(2) For CMCD/Diff-APT, the variance and bias reduce steadily as $K$ increases.
(3) For both PT and CMCD/Diff-APT, the variance and bias markedly decrease as the number of chains $N$ increases, thereby empirically validating \Cref{prop:estimators}.

% \vspace{-2mm}
\subsection{Comparing APT with Neural Samplers}
% \vspace{-1mm}
\looseness=-1
A significant advantage of APT is its asymptotic consistency---unlike most neural samplers, the Metropolis correction ensures APT does not incur a bias if the neural transport is poorly trained.
To demonstrate this advantage, for CMCD-APT and Diff-APT on DW-4, we take the trained forward transition kernels $P_k^{n-1}$  and map samples directly from the reference distribution to the target; we denote these respective new samplers by \emph{CMCD} and \emph{Diffusion} respectively and we compare the performance of these neural samplers to CMCD-APT and Diff-APT in \Cref{fig:ablation-doublewell}; for further details, see Appendix \ref{app:compare-apt-neural}.
To aid with visualisation, we plot the \emph{interatomic distances} of samples in DW-4.

\par
As we can see, directly using the learnt neural samplers dramatically drops performance, especially for small $K$.
Out of the two, diffusion performs better than CMCD but still misallocates probability mass across two modes.
On the contrary, all variants of CMCD-APT and Diff-APT recover the correct mode weights and align closely with the ground truth.
\begin{figure}[h!]
    \centering
    \vspace{-1mm}
    \includegraphics[width=0.9\linewidth]{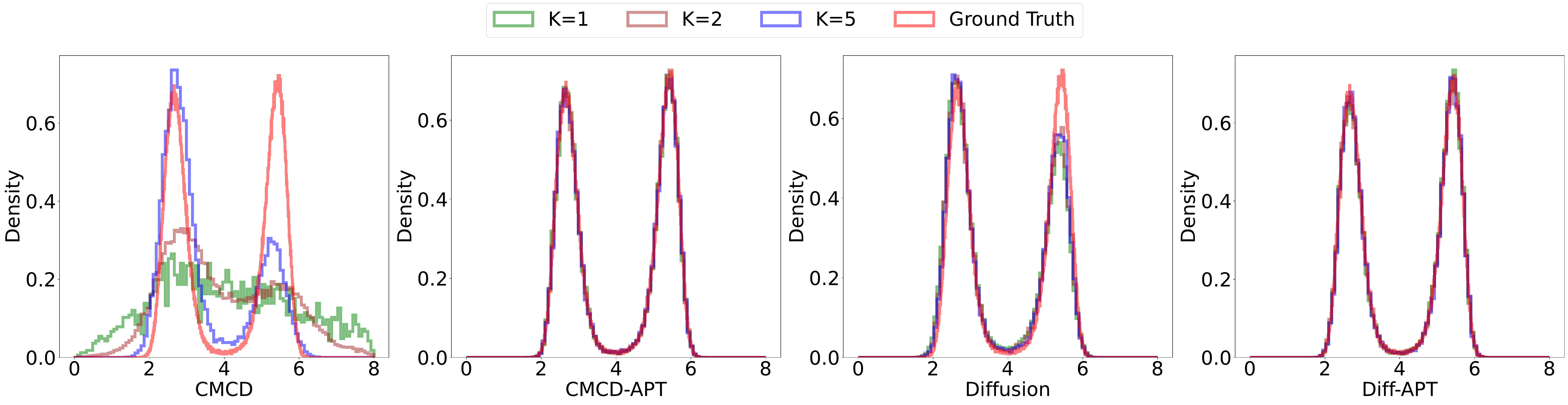}
    \vspace{-2mm}
    \caption{Interatomic distance $d_{ij}$ of 5,000 samples by CMCD, CMCD-APT, Diffusion, Diff-APT with 30 chains, $K=1, 2, 5$ on DW4. We take 100,000 samples by PT with 60 chains as ground truth.}
    \label{fig:ablation-doublewell}
    \vspace{-3mm}
\end{figure}

\begin{wraptable}{r}{0.45\textwidth} % Set the container width here
    \centering
    \vspace{-4mm}
    \caption{PT versus CMCD-APT targeting Alanine Dipeptide. We use the same metrics as defined in Table \ref{tab:round-trip-APT-vs-PT}.}
    \label{tab:AD}
    \resizebox{\linewidth}{!}{% <-- Resizes content to fill the wraptable width
    \begin{tabular}{lccc}
        \toprule
        Method  & $\hat{\Lambda}_K$ $(\downarrow)$ & R $(\uparrow)$ & CN-R $(\uparrow)$ \\ \midrule
        CMCD-APT ($K=1$)  & 3.23 & 465 & \textbf{232.5} \\
        CMCD-APT ($K=2$)  & 3.15 & 597 & 199 \\
        CMCD-APT ($K=5$)  & \textbf{3.09} & \textbf{627} & 104.5 \\ \midrule
        PT  & 3.38 & 199 & 99.5 \\ \bottomrule
    \end{tabular}%
    }
    \vspace{-5mm}
\end{wraptable}
%\newpage

\vspace{-2mm}
\subsection{Alanine Dipeptide}
\vspace{-2mm}
\looseness=-1
Finally, to provide a realistic target on which to evaluate APT, we take the Boltzmann distribution of Alanine Dipeptide in Cartesian coordinates at 300K, and we compare PT against our best performing APT variant: CMCD-APT with $N=4$.
This is a small molecule with 22 atoms (66 dimensions in total) which is highly challenging to sample from due to many prohibitive energy barriers and high multimodality. 
Following the conventions of molecular dynamics, we take the reference distribution to be a tempered version of the target corresponding to 1200K.
We run each algorithm for $T=50,000$ iterations and report the results in Table \ref{tab:AD}; for further details, see Appendix \ref{app:ad}.
We can see that CMCD-APT is able to provide significant acceleration even in this difficult setting with increased round trips, as well as computed-normalised round trips, over PT.

\vspace{-2mm}
\section{Conclusion}\label{sec:conclusion}
\vspace{-2mm}
In this work, we formalise the framework of Accelerated Parallel Tempering which integrates neural transports to improve the efficiency of PT.
%for integrating neural transports for improving the efficiency of PT.
However, several limitations, common to neural samplers, remain, including the burden of well optimising a neural network \citep{grenioux2026diffusion} which has a large impact on performance---i.e. a poorly trained neural transport may result in underperformance. 
Further, while our approach accelerates PT by increasing the round trip rate, it incurs additional neural network evaluations.
Future work should therefore develop principled criteria for deciding when to rely on PT or APT for robustness.
On a different note, an interesting avenue for future work is developing APT for inference-time control of generative models \citep{he2025crepe}, following the similar application of SMC for these purposes.

\newpage

\section*{Acknowledgements}
The authors are grateful to George Deligiannidis for helpful discussions. LZ and PP are supported by the EPSRC CDT in Modern Statistics and Statistical Machine Learning (EP/S023151/1). JH acknowledges support from the University of Cambridge Harding Distinguished Postgraduate Scholars Programme. SS acknowledges support from the EPSRC CoSInES grant (EP/R034710/1) and the NSERC Postdoctoral Fellowship.

\bibliography{iclr2026_conference}
\bibliographystyle{iclr2026_conference}

\newpage

\appendix

\section{Further Details on Round Trips}\label{app:round-trips}

Consider the APT Markov chain $\mathbf{X}_t=(X^0_t,\ldots,X^N_t)$ constructed by \Cref{alg:APT}. Assume there are $m=0,\ldots,N$ parallel threads, each storing a component of $\mathbf{X}_t$ at time $t$. During the communication step, it is equivalent to swap indices rather than states. This has computational advantages in a distributed implementation of PT since swapping states across machines incurs a cost of $O(d)$ compared to the $O(1)$ cost of swapping indices. By tracking how the indices are shuffled, we can recover the PT state and also track the communication between reference and target through the dynamics of the permuted indices.
See \cite[Algorithm 5]{syed2022non,surjanovic2023pigeons} for details on implementing PT with a distributed implementation.

Summarising \cite{syed2022non}, we define the \emph{index process} $(\mathbf{I}_t,\boldsymbol{\epsilon}_t)$ where $\mathbf{I}_t=(I^0_t,\ldots,I^N_t)$ is a sequence of permutations of $[N]=\{0, \ldots, N\}$ that tracks the underlying communication of states in $\mathbf{X}_t$, and $\boldsymbol{\epsilon}_t=(\epsilon^0_t,\ldots,\epsilon^N_t)$ with $\epsilon^m_t\in\{-1,1\}$ tracks the direction of the swap proposal on machine $m$. 

We initialize $I^m_0=m$ and $\epsilon^m_0=-1$ if $m$ is even and $\epsilon^m_0=1$ if $m$ is odd. The subsequent values of $\mathbf{I}_t$ are then determined by the swap moves. At iteration $t$ of PT, we apply the same swaps to the components of $\mathbf{I}_t$ that are proposed and accepted during the communication phase. As discussed in \cite{syed2022non}, for non-reversible communication, $I^m_t$ and $\epsilon^m_t$ satisfy the recursion:
\begin{align*}
I^m_t &= \begin{cases}
        I^m_{t-1}+\epsilon^m_{t-1}, & \text{if } S^{I^m_{t-1}\vee  I^m_{t-1}+\epsilon^m_{t-1}}_t=1\\
        I^m_{t-1}, & \text{if } S^{I^m_{t-1}\vee  I^m_{t-1}+\epsilon^m_{t-1}}_t=0
    \end{cases}, \\
\epsilon^m_t &= \begin{cases}
        \epsilon^m_{t-1}, & \text{if } I^m_t=I^m_{t-1}+\epsilon^m_{t-1}\\
        -\epsilon^m_{t-1}, & \text{if } I^m_t=I^m_{t-1}
    \end{cases}.
\end{align*}
Here $S^{n}_t=1$ if the $n$-th swap at time $t$ was accepted and $S^{n}_t=0$ otherwise.

Hence, for a realization of $\mathbf{X}_t$ with $T$ steps, we define the \emph{round trips for index $n\in[N]$} as the number of times the index $n$ in $\mathbf{I}_t$ completes the journey from the $0$-th component to the $N$-th component and then back to the $0$-th component---i.e., completes the round trip from the reference to the target and back again. 

For $m\in [N]$, let $T^m_{\downarrow,0}:=\inf\{t:(I^m_t,\epsilon^m_t)=(0,-1)\}$ and for $j\geq 1$ define $T^m_{\uparrow,j}$ and $T^m_{\downarrow,j}$ recursively as follows:
\begin{align*}
    T^m_{\uparrow,j}&=\inf\{t>T^m_{\downarrow,j-1}: (I^m_t,\epsilon^m_t)=(N,1)\},\\
    T^m_{\downarrow,j}&=\inf\{t>T^m_{\uparrow,j}: (I^m_t,\epsilon^m_t)=(0,-1)\}.
\end{align*}
Notably, $T^m_{\downarrow,j}$ represents the $j$-th time the index for machine $m$ has traversed from $0$ to $N$ and back to $0$, thus defining a \emph{round trip}. A round trip indicates that a sample on machine $m$ from the reference propagated a reference sample to the target independent of previously visited target states on the same machine. Let $R^m_T=\max\{j: T^m_{\downarrow,j}\leq T\}$ denote the number of round trips that occur on machine $m$ by iteration $T$. The overall \emph{round trips} count is then defined as the sum of round trips over all index values: $R_T=\sum_{m=0}^NR^m_T$. Finally, the empirical round trip rate $\tau_T = R_T/T$ represents the fraction of PT iterations during which a round trip occurred, and our objective is to maximise the expected round trip rate $\tau$ as $T\to\infty$: 
\[
\tau:=\lim_{T\to\infty}\mathbb{E}[\tau_T]=\lim_{T\to\infty}\frac{\E[R_T]}{T}.
\]

\section{Theoretical Analysis of Accelerated PT}
\label{sec:proofs}

\subsection{Proof of \Cref{thm:validity_swap_gen}} \label{proof:validity_swap_gen}

\subsubsection{Proof of Ergodicity (\Cref{thm:validity_swap_gen}, Part 1)}
\newcommand{\xxnmo}[1][]{X_{t#1}^{n-1}}  % x n minus one
\newcommand{\xxn}[1][]{X_{t#1}^{n}}  % x n
\newcommand{\xxnmoc}{\xxnmo[,0:K]}  % x n minus one complete path
\renewcommand{\xxnmoc}[1][,0:K]{\fwd X_{t#1}^{n-1}}
\newcommand{\xxnc}{\xxn[,0:K]}  % x n complete path
\renewcommand{\xxnc}[1][,0:K]{\bwd X_{t#1}^{n}}
\newcommand{\powerround}[1]{^{#1}}
\newcommand{\xnnew}{\hat X_t\powerround{n}}  % x n new
\newcommand{\xnmonew}{\hat X_t\powerround{n-1}}  % x n minus one new
\begin{proof}
At stationary, we have $(\xxnmo, \xxn) \sim \pi\powerround{n-1} \otimes \pi\powerround{n}$.
We next simulate the two paths $\xxnmoc=(\fwd{X}^{n-1}_{t,0},\dots,\fwd{X}^{n-1}_{t,K})$ and $\xxnc=(\bwd{X}^{n}_{t,0},\dots,\bwd{X}^{n}_{t,K})$ as described in \Cref{sect:dpt} and calculate the acceptance probability
\[\alpha=\alpha^n_K(\xxnmoc, \xxnc)\]
given in \Cref{eq:swap_proba_gen}.
The new states $\xnmonew, \xnnew$ are determined by
\[
(\xnmonew, \xnnew) =
\begin{cases}
	(\xxnc[,0], \xxnmoc[,K]) &\textrm{ with probability } \alpha, \\
	(\xxnmoc[,0], \xxnc[,K]) &\textrm{ with probability } 1-\alpha.
\end{cases}
\]
We would like to prove that the swap operation keeps the target distribution invariant, that is, for any real-valued test function $\varphi$ we have
\begin{equation}
	\label{eq:invariant_eq}
	\E\ps{\varphi(\xnmonew, \xnnew)} = \E\ps{\varphi(\xxnmo, \xxn)}.
\end{equation}
One way to do this is to consider the extended target distribution
\[
\xxnmoc \otimes \xxnc \sim \sP_K\powerround{n-1}(\ddd x_{0:K}\powerround{n-1}) \otimes \sQ_K\powerround{n}(\ddd x_{0:K}\powerround{n})
\]
and the involution
\[
\mathfrak T(x_{0:K}, y_{0:K}) = (y_{0:K}, x_{0:K})
\]
and to apply an instance of MCMC algorithms with a deterministic proposal \citep{MCMCTierney}, here given by $\mathfrak T$.
For completeness we give a self-contained proof.
Write
\newcommand{\vpps}[1]{\varphi(#1)}  % \varphi \parenthesis_round (something)
\newcommand{\cmps}[1]{\varphi\circ\psi(#1)} % combined function parenthesis round (something)
\begin{equation*}
\begin{split}
\E\ps{\vpps{\xnmonew, \xnnew}} &=
\E\ps{
	\vpps{\xxnc[,0], \xxnmoc[,K]}\alpha +
	\vpps{\xxnmoc[,0], \xxnc[,K]}(1-\alpha)
} \\
&= \E\ps{\vpps{\xxnmo, \xxn}}
-\E\ps{\vpps{\xxnmoc[,0], \xxnc[,K]} \alpha}+\\
&\quad +\E\ps{\vpps{\xxnc[,0], \xxnmoc[,K]}\alpha}.
\end{split}
\end{equation*}
To arrive at \Cref{eq:invariant_eq} we need to show that
\newcommand{\epq}[1]{\E_{\sP\otimes\sQ}\ps{#1}}
\newcommand{\eqp}[1]{\E_{\sQ\otimes\sP}\ps{#1}}
\newcommand{\dqdp}{\ddd\sQ/\ddd\sP}
\begin{equation}
	\label{eq:invariant_sym}
	\epq{\cmps{\xxnmoc, \xxnc}\alpha} = \epq{\cmps{\xxnc, \xxnmoc}\alpha}
\end{equation}
where $\psi(x_{0:K}, y_{0:K}) := (x_0, y_K)$ and the notation $\E_{\sP\otimes\sQ}$ means that the expectation is taken with respect to $\sP_K\powerround{n-1}(\ddd x_{0:K}\powerround{n-1}) \otimes \sQ_K\powerround{n}(\ddd x_{0:K}\powerround{n})$.
Noting that
\[
\alpha = \alpha^n_K(\xxnmoc, \xxnc) = 1 \wedge \frac{\dqdp(\xxnmoc)}{\dqdp(\xxnc)}
\]
we write
\begin{equation*}
\begin{split}
&\epq{\cmps{\xxnmoc, \xxnc}\alpha}\\
=\ &\eqp{\cmps{\xxnmoc, \xxnc}\alpha
	\frac{\ddd\sP\otimes\sQ}{\ddd\sQ\otimes\sP}
	(\xxnmoc, \xxnc)
}\\
=\ &\eqp{\cmps{\xxnmoc, \xxnc}\alpha
	\frac{\dqdp(\xxnc)}{\dqdp{(\xxnmoc)}}
}\\
=\ &\eqp{\cmps{\xxnmoc, \xxnc}\px{
		\frac{\dqdp(\xxnc)}{\dqdp(\xxnmoc)}
		\wedge 1
}}\\
=\ &\epq{\cmps{\xxnc, \xxnmoc}\px{
		\frac{\dqdp(\xxnmoc)}{\dqdp(\xxnc)}
		\wedge 1
	}
}\\
=\ &\epq{\cmps{\xxnc, \xxnmoc} \alpha}.
\end{split}
\end{equation*}
Thus \Cref{eq:invariant_sym} is established.

Finally, the accelerated-PT Markov chain is aperiodic because the swaps can be rejected.
The chain is clearly irreducible if each exploration kernel is irreducible.
(In fact it can be proved, using more complicated arguments, that the chain is irreducible if the exploration kernel for the reference is irreducible and, for all $n$, the two distributions $\sP_K\powerround{n-1}$ and $\sQ_K\powerround{n}$ are mutually absolutely continuous.)
By \citet[Theorem 4, Fact 5]{RobertsRosenthal2004GeneralMCMC}, the Markov chain is ergodic and in particular the law of large numbers holds.
\end{proof}

\subsubsection{Proof of Rejection Rate (\Cref{thm:validity_swap_gen}, Part 2)}

\begin{proof}
\newcommand{\epq}[1]{\E_{\sP\otimes\sQ}\ps{#1}}
The rejection rate at stationary is defined as
\begin{equation}
\label{rratedef}
r(\sP^{n-1}_K, \sQ^n_K) := \epq{1-\alpha_K^n(\xxnmoc, \xxnc)}
\end{equation}
where $\E_{\sP\otimes\sQ}$ is a shorthand for the expectation under $\sP^{n-1}_K \otimes \sQ^{n}_K$ and the acceptance ratio $\alpha_K^n$ is given by
\newcommand{\sPf}{\sP^{n-1}_K}
\newcommand{\sQf}{\sQ^n_K}
\newcommand{\dqdpf}{\ddd\sQf/\ddd\sPf}
\begin{equation}
\label{accratedef}
\alpha_K^n(\xxnmoc, \xxnc):= 1 \wedge \frac{\dqdpf(\xxnmoc)}{\dqdpf(\xxnc)} 
= 1 \wedge \frac{\ddd (\sQf \otimes \sPf)}{\ddd (\sPf \otimes \sQf)}(\xxnmoc, \xxnc).
\end{equation}
From \Cref{rratedef}, \Cref{accratedef}, and \Cref{lem:tv} below we have
\[ r(\sPf, \sQf) = \TV{\sPf \otimes \sQf}{\sQf \otimes \sPf}. \]
\end{proof}

\begin{lemma}
	\label{lem:tv}
	Let $\mu_1$ and $\mu_2$ be two mutually absolutely continuous probability measures. Then
	\[ \TV{\mu_1}{\mu_2} = \E_{\mu_1}\ps{1 - \min\pr{1, \frac{\ddd \mu_2}{\ddd \mu_1}}}. \]
\end{lemma}
\begin{proof}
\newcommand{\emuone}[1]{\E_{\mu_1}\ps{#1}}
\newcommand{\emutwo}[1]{\E_{\mu_2}\ps{#1}}
\newcommand{\tvdomain}{{h: \mathcal X \to [0, 1]}}
\newcommand{\tvrnd}{\frac{\ddd \mu_2}{\ddd \mu_1}}
\newcommand{\tvlresult}{\E_{\mu_1}\ps{1 - \min\pr{1, \frac{\ddd \mu_2}{\ddd \mu_1}}}}
	Recall the definition of the TV distance
		\[ \TV{\mu_1}{\mu_2} := \sup_\tvdomain |\emuone{h} - \emutwo{h}|. \]
First, we remark that the absolute value in the definition can be omitted since
\begin{equation*}
	\begin{split}
	\TV{\mu_1}{\mu_2} &= \sup_\tvdomain \max\pr{\emuone{h} - \emutwo h, \emutwo h - \emuone h} \\
	&= \sup_\tvdomain \max\pr{\emuone h - \emutwo h, \emuone{1-h} - \emutwo{1-h}} \\
	&= \sup_\tvdomain \emuone h - \emutwo h.
	\end{split}
\end{equation*}
Therefore
\begin{equation}
	\label{tvless}
	\begin{split}
	\TV{\mu_1}{\mu_2} &= \sup_\tvdomain \emuone{h\cdot \pr{1 - \tvrnd}}\\
	&\leq \sup_\tvdomain \emuone{h\cdot \pr{1 - \min\pr{1, \tvrnd}}} \\
	&\leq \tvlresult.
	\end{split}
\end{equation}
On the other hand, let $B := \px{x \in X \textrm{ such that } \ddd\mu_2/\ddd\mu_1(x) \leq 1}$ and put $h^*(x) := \mathbbm 1_{B}(x)$.
Using
\[ h^*(x) = 1 - \min\pr{1, \tvrnd(x)} + \tvrnd(x) \mathbbm 1_{B}(x) \] write
\begin{equation}
	\label{tvmore}
	\begin{split}
	\TV{\mu_1}{\mu_2} &\geq \emuone{h^*} - \emutwo{h^*} \\
	&= \tvlresult + \emuone{\tvrnd \mathbbm 1_B} - \emutwo{\mathbbm 1_B} \\
	&= \tvlresult.
	\end{split}
\end{equation}
Combining \Cref{tvless} and \Cref{tvmore} we get the desired result.
\end{proof}

\subsubsection{Proof of \Cref{prop:estimators}}\label{proof:estimators}
\begin{proof}
We will show the consistency of the expectation and free energy estimator separately. By \Cref{thm:validity_swap_gen}, the PT chain $\textbf{X}_t=(X^0_t,\ldots, X^N_t)$ is ergodic with stationary distribution $\pi^{0}\otimes\cdots\otimes\pi^N$. By the ergodic theorem, we have 
\[
\hat{\pi}_T^{n}[f]=\frac{1}{T}\sum_{t=1}^Tf(X^n_t)\xrightarrow[T\to\infty]{a.s.}\pi^n[f].
\]

We will now show the consistency of the free energy estimator. By definition of $W^n_K$ in \Cref{def:work_apt}, we can write $\Delta F_n$ in terms of expectations of $W^n_K$ and the Radon-Nikodym derivative between $\sP^{n-1}_K$ and $\sQ^{n}_K$ respectively:
\[
\frac{Z_n}{Z_{n-1}}\frac{\ddd \sQ^n_K}{\ddd\sP^{n-1}_K}=w^n_K,\quad \frac{Z_{n-1}}{Z_n}\frac{\ddd \sP^{n-1}_K}{\ddd\sQ^{n}_K}=\frac{1}{w^n_K}.
\]
By taking expectations of the left and right expressions with respect to $\sP^{n-1}_K$ and $\sQ^n_K$ respectively, and taking the product over $n=1,\ldots, N$, we obtain the following expressions for $Z$ and $Z^{-1}$ respectively:
\begin{equation}\label{eq:Z_formula_path}
Z=\prod_{n=1}^N\sP^{n-1}_K\left[w^{n}_K\right],\quad 
Z^{-1}= \prod_{n=1}^N\sQ^{n}_K\left[(w^{n}_K)^{-1}\right].
\end{equation}
Since $\textbf{X}_t$ is ergodic, $\fwd{Z}_T$ and $\bwd{Z}_T$ are consistent estimators for $Z$ respectively. Therefore $\hat{Z}_T=( \fwd{Z}_T\bwd{Z}_T)^{1/2}$ is consistent as well. Finally, note that if $\sP^{n-1}_K=\sQ^n_K$, then $w^n_K(x_{0:K})\stackrel{a.s.}{=}Z_n/Z_{n-1}$, and hence $\fwd{Z}_T= \bwd{Z}_T\stackrel{a.s.}{=}Z$.
\end{proof}

\subsection{Proof of \Cref{prop:rtr-formula}} \label{proof:rtr-formula}
Let $S^n_t$ be the indicator random variable with $S^n_t=1$ if the $n$-th swap is proposed and accepted at iteration $t$, and $S^n_t=0$ otherwise. By \Cref{assump:ELE}, the random variables $S^n_t$ are i.i.d. in $t$ with $S^n_1,S^n_2,\ldots\stackrel{d}{=}\mathrm{Bernoulli}(s_n)$, where $s_n:=1-r(\sP^{n-1},\sQ^n)$. This implies that the index processes $(I^m_t,\epsilon^m_t)$ form Markov chains on $\{0,\ldots,N\}\times\{-1,1\}$ with initial conditions $I^m_0=m$ and $\epsilon^m_0=-1$ when $m$ is even, $\epsilon^m_0=1$ when $m$ is odd. For $t\geq 1$, each process satisfies the following recursion:
\begin{align}
I^m_t&\sim 
\begin{cases}
    I^m_{t-1}+\epsilon^m_{t-1}, & \text{with probability }s_{I^m_{t-1}\vee (I^m_{t-1}+\epsilon^m_{t-1})}\\
    I^m_{t-1}, & \text{with probability }1-s_{I^m_{t-1}\vee (I^m_{t-1}+\epsilon^m_{t-1})}
\end{cases},\\
\epsilon^m_t&=\begin{cases}
    \epsilon^m_{t-1}, & \text{if } I^m_t=I^m_{t-1}+\epsilon^m_{t-1}\\
    -\epsilon^m_{t-1}, & \text{if } I^m_t=I^m_{t-1}
\end{cases}.
\end{align}
The remainder of the proof follows identically to \cite[Corollary 2]{syed2022non}.

\subsection{Proof of \Cref{prop:scaling-K}}
\label{apx:scale_K}
We first spell out the full statement of \Cref{prop:scaling-K}.
\paragraph{Proposition.}
%Suppose that $\sP^{n-1}_K$ and $\sQ^{n}_K$ arise as $K$-step Euler-Maruyama discretizations of the following SDEs respectively
Consider two SDEs
        \begin{align}
        \label{sdef}
            \fwd{X}^{n-1}(0) \sim \pi^{n-1}, &\quad d\fwd{X}^{n-1}(u) 
            = f(u, \fwd{X}^{n-1}(u))du + \sigma dB_u \text{ for } u \in [0,1] \\
        \label{sdeb}
            \bwd{X}^{n}(1) \sim \pi^n, &\quad d\bwd{X}^{n}(u) 
            = b(u, \bwd{X}^{n}(u))dt + \sigma dB'_u \text{ for } u \in [0,1];
        \end{align}
        where the second equation is integrated backwards in time.
        Let $\sP^{n-1}_\infty(\ddd x_{\infty})$ and $\sP^{n-1}_K(\ddd x_K)$ be respectively the full path measure and the Euler-discretised measure of \Cref{sdef}, in particular $x_\infty = (x_\infty(u))_{u \in [0,1]} \in \mathcal C[0,1]$ and $x_K = [x_K(0), x_K(1/K), \ldots, x_K(1)] \in \mathbb R^{K+1}$. Similarly let $\sQ^n_\infty(\ddd x_\infty)$ and $\sQ^n_K(\ddd x_K)$ be respectively the full path measure and the Euler-discretised measure of \Cref{sdeb}.
        Assume that $f(u, \cdot)$ and $f(\cdot, x)$ are Lipschitz for all $u$ and $x$ with a global constant; and that the same holds for $b$. Moreover, suppose that there exists a $G \geq 0$ such that
        $||f(u,x)|| + ||b(u,x)|| \leq G(1+||x||)$.
        Then
        \begin{equation}
        \label{scalestatement1}
          \lim_{K \to\infty} r(\sP_K^{n-1}, \sQ_K^n) = r(\sP_\infty^{n-1}, \sQ_\infty^n)
         \end{equation}
        and
        \begin{equation}
        \label{scalestatement2}
        r(\sP_K^{n-1}, \sQ_K^n) \leq r(\sP_\infty^{n-1}, \sQ_\infty^n) + \mathcal O(\frac{1}{\sqrt K})
        \end{equation}
        where $r(p, q) = \TV{p\times q}{q \times p}$.% and $\sP_\infty^{n-1}$ and $\sQ_\infty^n$ are respectively the full path measures of \Cref{sdef} and \Cref{sdeb}.
\newcommand{\dpp}{\sP_K^{n-1}} % discretised path P
\newcommand{\dpq}{\sQ_K^{n}} % discretised path Q
\newcommand{\ipp}{\sP_\infty^{n-1}} % infinite path P
\newcommand{\ipq}{\sQ_\infty^{n}} % infinite path Q
 \begin{proof}
\newcommand{\ippr}{\sP_{\infty|K}^{n-1}} % infinite path P restricted
\newcommand{\ipqr}{\sQ_{\infty|K}^n} % infinite path Q restricted
\newcommand{\sabs}[1]{\left| #1 \right|} % scaled absolute value
Let $\ippr$ and $\ipqr$ be the restrictions of the path measures $\ipp$ and $\ipq$ to the time discretisation points. (As such, $\ippr$ and $\ipqr$ are defined on the same space as $\dpp$ and $\dpq$.) We have
\begin{equation}
	\label{ps1}
\begin{split}
	\quad |r(\dpp, \dpq) - r(\ippr, \ipqr)| & \leq
	2\pr{\TV{\dpp}{\ippr} + \TV{\dpq}{\ipqr}} \\
	&\leq \mathcal O(1/\sqrt K)
\end{split}
\end{equation}
where the first inequality is elementary and the second follows from \Cref{prop:sde_KL} below. By data processing inequality
\begin{equation}
	\label{tvd}
	r(\ippr, \ipqr) \leq r(\ipp, \ipq).
\end{equation}
\Cref{ps1} and \Cref{tvd} establish \Cref{scalestatement2}.
On the other hand, putting
\begin{align}
	R_K(x_k, y_k) &:= \frac{\ddd (\dpq \times \dpp)}{\ddd (\dpp \times \dpq)}(x_k, y_k) \label{def_rk};\\
	R_\infty(x_\infty, y_\infty) &:= \frac{\ddd (\ipq \times \ipp)}{\ddd (\ipp \times \ipq)}(x_\infty, y_\infty) \label{def_r_inf};\\
	\varphi(\alpha) &:= 1 - \min(1, \alpha) \label{funcphi}
\end{align}
we have by \Cref{lem:tv}
\begin{equation}
	\label{prec1}
	r(\dpp, \dpq) = \E_{\dpp \times \dpq}[\varphi \circ R_K].
\end{equation}
Since $\varphi(\alpha) \in [0,1]$ the definition of the total variation distance implies
\begin{equation}
	\label{prec2}
	\left| 
	\E_{\dpp \times \dpq}[\varphi \circ R_K] -
	\E_{\ippr \times \ipqr}[\varphi \circ R_K]
	 \right|
	 \leq \mathcal O(1/\sqrt K).
\end{equation}
In addition define the discretisation operator 
\begin{align}
D_K: \mathcal C[0,1] &\to (\mathbb R^d)^{K+1}; \nonumber\\
x_\infty &\mapsto x_{\infty|K} = [x_\infty(0), x_\infty(1/K), \ldots, x_\infty(1)] \label{def_dk}
\end{align}
which takes a continuous path $x_\infty$ and extract its values at $K+1$ discretisation points. Let
\begin{equation}
\label{def_dk_tilde}
\tilde D_K(x_\infty, y_\infty) = (D_K(x_\infty), D_K(y_\infty)).
\end{equation}
Then
\begin{equation}
	\label{prec3}
\begin{split}
	\E_{\ippr \times \ipqr}[\varphi \circ R_K] &=
	\E_{\ipp \times \ipq}[\varphi \circ R_K \circ \tilde D_K] \\
	&\overset{K\to\infty}{\longrightarrow}
	\E_{\ipp \times \ipq}[\varphi \circ R_\infty]
	= r(\ipp, \ipq)
\end{split}
\end{equation}
by \Cref{prop:lemma_almost_sure} below.
Finally, \Cref{prec1}, \Cref{prec2}, and \Cref{prec3}
entail \Cref{scalestatement1}, finishing the proof.
\end{proof}

\begin{lemma}\label{prop:lemma_almost_sure}
We have
\[
    \E_{\ipp \times \ipq}[\varphi \circ R_K \circ \tilde D_K] \overset{K\to\infty}{\longrightarrow} \E_{\ipp \times \ipq}[\varphi \circ R_\infty]
\]
where the quantities $R_K$ and $R_\infty$ are defined in \Cref{def_rk} and \Cref{def_r_inf}; the function $\varphi$ is given by \Cref{funcphi}; and the operators $D_K$ and $\tilde D_K$ are defined in \Cref{def_dk} and \Cref{def_dk_tilde}.
\end{lemma}
\begin{proof}
Introduce the reference measure $\mathbb M(\ddd x_\infty)$ as the solution of the stationary SDE
\[
    \ddd X_u = -\frac{\sigma^2}{2}X_u \ddd u + \sigma \ddd B_u, \quad u \in [0,1] \text{ and } X_0 \sim \mathcal N(0, \operatorname{Id}).
\]
Let $\mathbb M^f_K(\ddd z)$ be the discrete measure defined on $(\mathbb R^d)^{K+1}$ and given by
\[
    z_0 \sim \mathcal N(0,\operatorname{Id}), \quad z_{k+1} | z_k \sim \mathcal N(z_k - \frac{\sigma^2}{2K} z_k; \frac{\sigma^2}K \operatorname{Id}).
\]
Similarly let $\mathbb M^b_K(\ddd z)$ be the discrete measure defined on $(\mathbb R^d)^{K+1}$ and given by
\[
    z_K \sim \mathcal N(0,\operatorname{Id}), \quad z_k|z_{k+1} \sim \mathcal N(z_{k+1} - \frac{\sigma^2}{2K} z_{k+1}; \frac{\sigma^2}{K} \operatorname{Id}).
\]
We first verify the three statements:
\begin{align}
    \frac{\ddd \dpq}{\ddd \mathbb M^b_K}(D_Kx_\infty) &\overset{\mathbb P}{\longrightarrow} \frac{\ddd \ipq}{\ddd \mathbb M}(x_\infty) \text{ under } \ipq; \label{l2_first} \\
    \frac{\ddd \dpp}{\ddd \mathbb M^f_K}(D_Kx_\infty) &\overset{\mathbb P}{\longrightarrow} \frac{\ddd \ipp}{\ddd \mathbb M}(x_\infty) \text{ under } \ipp;\label{l2_second}\\
    \frac{\ddd \mathbb M^f_K}{\ddd \mathbb M^b_K}(D_Kx_\infty) &\overset{\mathbb P}{\longrightarrow} 1 \text{ under either } \ipq \text{ or } \ipp.\label{l2_third}
\end{align}
The stationary measure $\mathbb M$ admits a tractable forward and backward form. The Girsanov theorem can therefore be applied to derive a closed-form formula for $\ddd \ipq/\ddd \mathbb M$ and $\ddd \ipp/\ddd \mathbb M$.
The three statements are then straightforwardly verified via the convergence in probability of the Euler approximation of an It\^o (forward or backward) stochastic integral \citep[Proposition 5.9]{legallbrownian}.
For example, for \Cref{l2_second} we have
\begin{align*}
    \frac{\ddd \dpp}{\ddd \mathbb M^f_K}(D_Kx_\infty) &= \frac{\pi^{n-1}(x_\infty(0)) \prod_{k=0}^{K-1} \mathcal N(x_\infty(\frac {k+1}K)|x_\infty(\frac kK) + \frac 1K f(\frac kK, x_\infty(\frac kK)); \frac{\sigma^2}{K} \operatorname{Id})}
    {\mathcal N(x_\infty(0)|0, \operatorname{Id}) \prod_{k=0}^{K-1} \mathcal N(x_\infty(\frac {k+1}K)|x_\infty(\frac kK) - \frac {\sigma^2}{2K} x_\infty(\frac kK); \frac{\sigma^2}{K} \operatorname{Id})} \\
    &= \frac{\pi^{n-1}(x_\infty(0))}{\mathcal N(x_\infty(0)|0, \operatorname{Id})} \exp\left\{\sum_{k=0}^{K-1}\left\langle \frac{f(\frac kK,x_\infty(\frac kK))}{\sigma^2} + \frac{x_\infty(\frac kK)}{2} ,x_\infty(\frac{k+1}{K}) - x_\infty(\frac kK)\right\rangle \right\} \times\\
    &\times \exp\left\{\frac{1}{2K} \sum_{k=0}^{K-1} \left( \frac{\sigma^2 ||x_\infty(\frac kK)||^2}{4} - \frac{||f(\frac kK, x_\infty(\frac kK))||^2}{\sigma^2} \right) \right\}\\
    &\overset{\mathbb P}{\longrightarrow}  \frac{\pi^{n-1}(x_\infty(0))}{\mathcal N(x_\infty(0)|0, \operatorname{Id})} \exp\left\{\int_0^1 \left\langle \frac{f(u,x_\infty(u))}{\sigma^2} + \frac{x_\infty(u)}{2} ,\ddd x_\infty(u) \right\rangle \ddd u \right\} \times \\
    &\times \exp\left\{ \frac 12 \int_0^1 \left( 
    \frac{\sigma^2 ||x_\infty(u)||^2}{4} -
    \frac{||f(u, x_\infty(u))||^2}{\sigma^2}
    \right) \ddd u \right\} \\
    &= \frac{\ddd \ipp}{\ddd \mathbb M}(x_\infty).
\end{align*}
Now recall that any sequence of variables converging in probability admits a subsequence converging almost-surely. (The measures $\ipp$ and $\ipq$ being mutually absolutely continuous, almost sure convergence can be understood w.r.t. any of them.)
Given any sequence of increasing values of $K$, applying that remark sequentially three times through \Cref{l2_first}, \Cref{l2_second} and \Cref{l2_third} produces a subsequence $(K_\ell)$ such that
\begin{align*}
    \frac{\ddd \sQ_{K_\ell}^n}{\ddd \mathbb M^b_{K_\ell}}(D_{K_\ell}x_\infty) &\overset{\text{a.s.}}{\longrightarrow} \frac{\ddd \ipq}{\ddd \mathbb M}(x_\infty); \\
    \frac{\ddd \sP_{K_\ell}^{n-1}}{\ddd \mathbb M^f_{K_\ell}}(D_{K_\ell}x_\infty) &\overset{\text{a.s.}}{\longrightarrow} \frac{\ddd \ipp}{\ddd \mathbb M}(x_\infty); \\
    \frac{\ddd \mathbb M^f_{K_\ell}}{\ddd \mathbb M^b_{K_\ell}}(D_{K_\ell}x_\infty) &\overset{\text{a.s.}}{\longrightarrow} 1.
\end{align*}
Combining the three equations above
\begin{equation}
\label{sub_as_x}
\frac{\ddd \sQ^n_{K_\ell}}{\ddd \sP^{n-1}_{K_\ell}}(D_{K_\ell}x_\infty) \overset{\text{a.s.}}{\longrightarrow} \frac{\ddd \ipq}{\ddd \ipp}(x_\infty)
\end{equation}
and, by symmetry and by the subsequence extraction trick again, we can consider w.l.o.g. that
\begin{equation}
    \frac{\ddd \sP^{n-1}_{K_\ell}}{\ddd \sQ^{n}_{K_\ell}}(D_{K_\ell}y_\infty) \overset{\text{a.s.}}{\longrightarrow} \frac{\ddd \ipp}{\ddd \ipq}(y_\infty). \label{sub_as_y}
\end{equation}

Since
\begin{align*}
    R_K \circ \tilde D_K(x_\infty, y_\infty) &= \frac{\ddd \dpq}{\ddd \dpp}(D_Kx_\infty) \frac{\ddd \dpp}{\ddd \dpq} (D_Ky_\infty),\\
    R_\infty(x_\infty, y_\infty) &= \frac{\ddd \ipq}{\ddd \ipp}(x_\infty) \frac{\ddd \ipp}{\ddd \ipq} (y_\infty),
\end{align*}
the dominated convergence theorem and the boundedness of $\varphi$ gives
\[
\E_{\ipp \times \ipq}[\varphi \circ R_{K_\ell} \circ \tilde D_{K_\ell}] \overset{\ell\to\infty}{\longrightarrow} \E_{\ipp \times \ipq}[\varphi \circ R_\infty].
\]
Since we are always able to extract such a subsequence $(K_\ell)$ given any increasing sequence $K$, the lemma is proved.
\end{proof}

\newcommand{\ssP}{\sP^*}
\newcommand{\hhat}{}
\begin{proposition} \label{prop:sde_KL}
    Let $(X_t)_{t \in [0,1]}$ be the solution of an SDE of the form
    \begin{equation} \label{eqn:sde_for_discretization}
        dX_t = b(t, X_t)dt + \sigma dB_t,
    \end{equation}
    where we assume that $b(t, \cdot)$ is $L$-Lipschitz continuous for all $t$ and that $b(\cdot, x)$ is $B$-Lipschitz continuous for all $x$. We also assume a linear growth condition of the form $\|b(t, x)\| \leq G(1 + \|x\|)$. Consider the $K$-step Euler-Maruyama discretization $\hat{X}_{t_0}, \hat{X}_{t_1},  \dots , \hat{X}_{t_K}$ where $t_k = k/K$. Let $\sP_{\infty}$ be the path-measure of the process $(X_t)_{t \in [0,1]}$, $\ssP_K$ be the law of ${X}_{t_0}, \dots , {X}_{t_K}$, and $\hhat{\sP}_K$ be the law of $\hat{X}_{t_0}, \dots , \hat{X}_{t_K}$. Then 
    \begin{align*}
        \|\ssP_K - \hhat{\sP}_K\|_{\mathrm{TV}} \in \mathcal{O}\left(\frac{1}{\sqrt{K}}\right).
    \end{align*}
\end{proposition}

\begin{proof}
    Consider the continuous-time extension $(\hat{X}_t)_{t \in [0,1]}$ of the $K$-step Euler-Maruyama discretization of the SDE in \Cref{eqn:sde_for_discretization} given by
    \begin{equation*}
        d\hat{X}_t = b(t_k, \hat{X}_{t_k})dt + \sigma dB_t, \quad t \in [t_k, t_{k+1}).
    \end{equation*}
    We use $\hat{\sP}_{\infty} = \hat{\sP}^{(K)}_{\infty}$ to denote its path-measure. By  Pinsker's inequality
    \begin{align*}
        \|\ssP_K - \hhat{\sP}_K\|_{\mathrm{TV}}
        &\leq \sqrt{\frac{1}{2}\text{KL}(\ssP_K\|\hhat{\sP}_K)}.
    \end{align*}
    By the data-processing inequality and Girsanov's theorem, we have that
    \begin{equation*}
        \text{KL}(\ssP_K\|\hhat{\sP}_K)
        \leq \text{KL}({\sP}_\infty\|\hat{\sP}_\infty) 
        = \frac{1}{2\sigma^2}\sum_{k = 0}^{K-1}\int_{t_k}^{t_{k+1}}\E \|b(t, X_t) - b(t_k, X_{t_k}) \|^2dt.
    \end{equation*}
    By the Lipschitz continutity of $b$, we have the following bound
    \begin{align*}
        \E \| b(t, X_t) - b(t_k, X_{t_k}) \|^2
        &\leq 2\E \| b(t, X_t) - b(t, X_{t_k}) \|^2 + 2\E \| b(t, X_{t_k}) - b(t_k, X_{t_k}) \|^2 \\
        %&\leq 2L^2 \E\|X_t - X_{t_k}\|^2 +2B^2(t-t_k)^2\E\|X_{t_k}\|^2.
        &\leq 2L^2 \E\|X_t - X_{t_k}\|^2 +2B^2(t-t_k)^2.
    \end{align*}
    By the Cauchy-Schwarz inequality, the fact that $B_t - B_{t_k} \sim \mathcal{N}(0, (t-t_k)I_d)$ and the linear growth assumption, we obtain
    \begin{align*}
        \E\|X_t - X_{t_k}\|^2
        &=\E \left \|\int_{t_k}^t b(s, X_s)ds + \sigma(B_{t}-B_{t_k}) \right \|^2 \\
        &\leq 2(t-t_k)\int_{t_k}^t \E \left \|b(s, X_s)\right \|^2ds + 2d\sigma^2(t-t_k) \\
        &\leq 2G^2(t-t_k) \int_{t_k}^t \E(1 + \|X_s\|)^2ds + 2d\sigma^2(t-t_k)\\
        &\leq 4G^2(t-t_k) \int_{t_k}^t (1 + \E\|X_s\|^2)ds + 2d\sigma^2(t-t_k)
    \end{align*}
    It is well known that under the assumptions in the proposition, there exists a constant $M$ (independent of $K$) such that $\E\|X_{t}\|^2 \leq M$ for all $t$. Putting all of this together, we have shown that
    \begin{align*}
        \E \| b(t, X_t) - b(t_k, X_{t_k}) \|^2
        %&\leq 8GL^2(1+M)(t-t_k)^2 + 4d\sigma^2 L^2(t-t_k) + 2MB^2(t-t_k)^2.
        &\leq 8G^2L^2(1+M)(t-t_k)^2 + 4d\sigma^2 L^2(t-t_k) + 2B^2(t-t_k)^2.
    \end{align*}
    Summing over $k$ and integrating, this yields
    % \begin{align*}
    %     &\frac{1}{2\sigma^2}\sum_{k = 0}^{K-1}\int_{t_k}^{t_{k+1}}\E \|b(t, X_t) - b(t_k, X_{t_k}) \|^2dt \\
    %     &\leq \frac{1}{2\sigma^2}\sum_{k = 0}^{K-1} \frac{8}{3}GL^2(1+M)(t_{k+1}-t_k)^3 + 2d\sigma^2 L^2(t_{k+1}-t_k)^2 + \frac{2}{3}MB^2(t_{k+1}-t_k)^3 \\
    %     &\leq \frac{4}{3\sigma^2}GL^2(1+M)
    %     \frac{1}{K^2}+ dL^2\frac{1}{K} + \frac{1}{3\sigma^2}MB^2\frac{1}{K^2}.
    % \end{align*}
    \begin{align*}
        &\frac{1}{2\sigma^2}\sum_{k = 0}^{K-1}\int_{t_k}^{t_{k+1}}\E \|b(t, X_t) - b(t_k, X_{t_k}) \|^2dt \\
        &\leq \frac{1}{2\sigma^2}\sum_{k = 0}^{K-1} \frac{8}{3}G^2L^2(1+M)(t_{k+1}-t_k)^3 + 2d\sigma^2 L^2(t_{k+1}-t_k)^2 + \frac{2}{3}B^2(t_{k+1}-t_k)^3 \\
        &\leq \frac{4}{3\sigma^2}G^2L^2(1+M)
        \frac{1}{K^2}+ dL^2\frac{1}{K} + \frac{1}{3\sigma^2}B^2\frac{1}{K^2}.
    \end{align*}
    Therefore
    \begin{align*}
        \|\ssP_K-\hhat{\sP}_K\|_{\mathrm{TV}} \in \mathcal{O}(1/\sqrt{K}).
    \end{align*}
\end{proof}
We verify this result in practice by plotting the TV distance between $\sP_K$ and $\sP_{\infty}$ for the Ornstein–Uhlenbeck process in 1 dimension.
\begin{center}
    \includegraphics[width=0.7\linewidth]{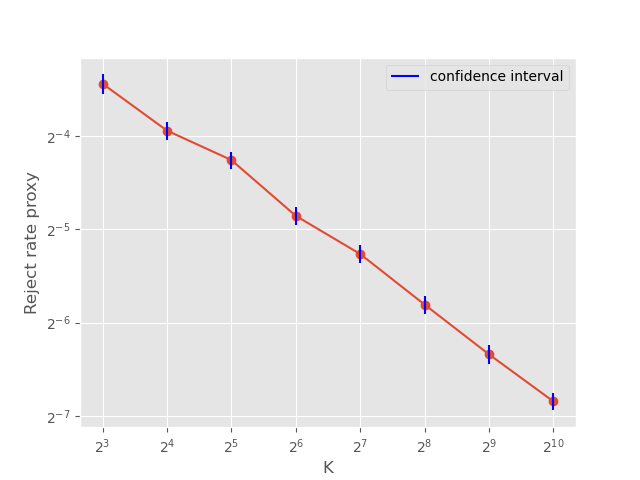}
\end{center}

%\clearpage
\subsection{Further Details on Scaling with Parallel Chains}\label{app:scale-N}

\newcommand{\bbeta}{\boldsymbol{\beta}}

\subsubsection{Regularity Assumptions}
Suppose $\pi^{\beta}(x)=\exp(-U^\beta(x))/Z_\beta$, where $Z_\beta=\int_\mathcal{X}\exp(-U^\beta(x))\ddd x$. For $\bbeta=(\beta,\beta')$, suppose $\sP^{\bbeta}_K(\ddd x_{0:K})$ and $\sQ^{\bbeta}_K(\ddd x_{0:K})$ are mutually absolutely continuous path measures on $\mathcal{X}^{K+1}$ with marginals $\sP^{\bbeta}(x_0)=\pi^\beta(\ddd x_0)$, and $\sQ^{\bbeta}(\ddd x_{K})=\pi^{\beta'}(\ddd x_K)$ with weight functional $w^{\bbeta}:\mathcal{X}^{K+1}\mapsto \R$, 
\[
w^{\bbeta}(x_{0:K})=\frac{Z_{\beta'}}{Z_\beta} \frac{\ddd \sQ^{\bbeta}}{\ddd\sP^{\bbeta}}(x_{0:K}),
\]
We will make the following regularity assumptions on $\sP^{\bbeta}$ and $\sQ^{\bbeta}$ analogous to Assumptions 5--7 in \cite{syed2024optimised}. Suppose there exists $\mathcal{F} = (\mathcal{F}_i)_{i\in\mathbb{N}}$ of a nested sequence $\mathcal{F}_0\subset \mathcal{F}_1\subset \cdots$ of vector spaces of measurable function $f: \mathcal{X}^{K+1}\to\R$, such that (1) $\mathcal{F}_i$ contains the constant functions, and (2) $\mathcal{F}_i$
is closed under domination, i.e.,
if $g$ is a measurable function such that $|g|\leq |f|$ for some $f \in \mathcal{F}_i$, then $g \in \mathcal{F}_i$. 

\begin{assumption}\label{assump:regularity-1}
For $i=0,1,2$, and $\bbeta\mapsto\sP^{\bbeta}[f]$ and $\bbeta\mapsto\sQ^{\bbeta}[f]$ are $i$-times continuously differentiable for all $f\in\mathcal{F}_i$, there exists signed-measures $\partial_{\bbeta}^i\sP^{\bbeta}$ and $\partial_{\bbeta}^i\sQ^{\bbeta}$ integrable over $\mathcal{F}_i$ such that for all $f\in\mathcal{F}_i$,
\[
\partial^i_{\bbeta}\sP^{\bbeta}[f]=\partial_{\bbeta}^i(\sP^{\bbeta}[f]),\quad \partial^i_{\bbeta}\sQ^{\bbeta}[f]=\partial_{\bbeta}^i(\sQ^{\bbeta}[f]).
\]
\end{assumption}

\begin{assumption}\label{assump:regularity-2}
$\bbeta\mapsto W^{\bbeta}$ is $2$-times continuously differentiable, and for $i=0,1,2$ and $i_1,\dots,i_j\geq 1$ and $i_1+\dots+i_j=i$, we have $\bar{W}_{i_1}\cdots\bar{W}_{i_j}\in \mathcal{F}_{3-i}$, where $\bar{W}_i=\sup_{\bbeta}\left|\partial_{\bbeta}^iW^{\bbeta}\right|$.
\end{assumption}

\begin{assumption}\label{assump:regularity-3}
For all $\beta$ we have $\sP^\beta:=\sP^{\beta,\beta}=\sQ^{\beta,\beta}=:\sQ^\beta$.
\end{assumption}
\begin{lemma}\label{lem:product-rule}
    Let $g^{\bbeta}:\mathcal{X}^{K+1}\times\mathcal{X}^{K+1}\to \R$,
    \begin{itemize}
        \item $\bbeta\mapsto g^{\bbeta}(x_{0:K},x'_{0:K})$ is continuously differentiable,
        \item  $x_{0:K}\mapsto g^{\bbeta}(x_{0:K},x'_{0:K})$ and $x_{0:K}'\mapsto g^{\bbeta}(x_{0:K},x'_{0:K})$ are in $ \mathcal{F}_1$,
        \item  $x_{0:K}\mapsto \partial_{\bbeta}g^{\bbeta}(x_{0:K},x'_{0:K})$ and $x_{0:K}'\mapsto \partial_{\bbeta}g^{\bbeta}(x_{0:K},x'_{0:K})$ are in $ \mathcal{F}_0$.
    \end{itemize}
    Then $\bbeta\mapsto\sP^{\bbeta}\otimes\sQ^{\bbeta}[g^{\bbeta}]$ is continuously differentiable with derivative,
    \[
    \partial_{\bbeta}(\sP^{\bbeta}\otimes\sQ^{\bbeta}[g^{\bbeta}])=\partial_{\bbeta}\sP^{\bbeta}\otimes \sQ^{\bbeta}[g]+\sP^{\bbeta}\otimes\partial_{\bbeta} \sQ^{\bbeta}[g]+\sP^{\bbeta}\otimes\sQ^{\bbeta}[\partial_{\bbeta}g^{\bbeta}].
    \]
\end{lemma}
\begin{proof}
For notational convenience we denote $x:=x_{0:K}$ and $x':=x_{0:K}$. Given $x\in\mathcal{X}^{K+1}$ let $g_{x}^{\bbeta}:\mathcal{X}^{K+1}\to\R$ be the marginal $g^{\bbeta}_x(x')=g(x,x')$ and let $q^{\bbeta}:\mathcal{X}^{K+1}\to\R$ denote the expectation of $g_x^{\bbeta}$ with respect to $\sQ^{\bbeta}$, i.e. $q^{\bbeta}(x)=\sQ^{\bbeta}[g^{\bbeta}_x]$. By product rule for measure-valued derivatives, we have for all $x$, $\bbeta\mapsto q^{\bbeta}(x)$ is continuously differentiable with derivative, 
\begin{align*}
\partial_{\bbeta}q^{\bbeta}(x)
&=\partial_{\bbeta}(\sQ^{\bbeta}[g_x^{\bbeta}])\\
&=\partial_{\bbeta}\sQ^{\bbeta}[g^{\bbeta}_x]+\sQ^{\bbeta}[\partial_{\bbeta}g^{\bbeta}_x].
\end{align*}
by taking expectation of both sides with respect to $\sP^{\bbeta}$ and using Fubini's theorem we have 
\[
\sP^{\bbeta}[\partial_{\bbeta}q^{\bbeta}]=\sP^{\bbeta}\otimes \partial_{\bbeta}\sQ^{\bbeta}[g^{\bbeta}]+\sP^{\bbeta}\otimes\sQ^{\bbeta}[\partial_{\bbeta} g^{\bbeta}].
\]
Again, using the product rule for measure-valued derivatives,
\begin{align*}
\partial_{\bbeta}(\sP^{\bbeta}\otimes\sQ^{\bbeta}[g^{\bbeta}])
&=\partial_{\bbeta}(\sP^{\bbeta}[q^{\bbeta}])\\
&=\partial_{\bbeta}\sP^{\bbeta}[q^{\bbeta}]+\sP^{\bbeta}[\partial_{\bbeta}q^{\bbeta}].
\end{align*}
The result follows by noting that $\partial_{\bbeta}\sP^{\bbeta}[q^{\bbeta}]=\partial_{\bbeta}\sP^{\bbeta}\otimes \sQ^{\bbeta}[g^{\bbeta}]$ by Fubini's theorem. 
\end{proof}

\subsubsection{Proof of \Cref{thm:barrier}} \label{proof:barrier}
For $\bbeta=(\beta,\beta')$ recall the rejection rate can be expressed as $r(\sP^{\bbeta},\sQ^{\bbeta})=1-\sP^{\bbeta}\otimes \sQ^{\bbeta}[\alpha^{\bbeta}]$, where $\sP^{\bbeta}\otimes \sQ^{\bbeta}$ is a measure over $\mathcal{X}^{K+1}\times\mathcal{X}^{K+1}$ is the product between the forward and backwards paths,
\[
\sP^{\bbeta}\otimes \sQ^{\bbeta}(\ddd x_{0:K},\ddd x_{0:K}'):=\sP^{\bbeta}(\ddd x_{0:K})\sQ^{\bbeta}(\ddd x_{0:K}'),
\] $\alpha^{\bbeta}:\mathcal{X}^{K+1}\times\mathcal{X}^{K+1}\to[0,1]$ is the acceptance probability for swap proposed,
\begin{align*}
\alpha^{\bbeta}(x_{0:K},x_{0:K}')&=\min\left\{1,\frac{w^{\bbeta}_K(x_{0:K})}{w^{\bbeta}_K(x_{0:K}')}\right\}=
\exp\left(\min\left\{0,\Delta W^{\bbeta}_K(x_{0:K},x_{0:K}')\right\}\right)
% &=\sP^{\bbeta}\otimes\sQ^{\bbeta}\left[R^{\bbeta}_K\right]\\
\end{align*}
where $\Delta W^{\bbeta}:\mathcal{X}^{K+1}\times\mathcal{X}^{K+1}\to\R$ is the change in log-weight $W^{\bbeta}(x_{0:K}):=\log w^{\bbeta}(x_{0:K})$,
\[
\Delta W^{\bbeta}(x_{0:K},x_{0:K}'):=W^{\bbeta}(x_{0:K})-W^{\bbeta}(x_{0:K}') 
\]

\begin{lemma}\label{lem:local-barrier} Suppose Assumptions 2--4 hold. For all $\bbeta=(\beta,\beta')$ with $\Delta\beta=\beta'-\beta>0$, exists a constant $C>0$ independent of $\bbeta$ such that,
\[
\left|r(\sP^{\bbeta},\sQ^{\bbeta})-\int_{\beta}^{\beta'}\lambda_b\ddd b\right|\leq C\Delta\beta^2.
\]
where $\lambda_\beta:=\frac{1}{2}\sP^{\beta}\otimes \sQ^{\beta}[|\Delta\dot{W}^\beta|]$ and $\Delta \dot W^\beta :=\lim_{\beta'\to\beta}\partial_{\beta'}\Delta W^{\bbeta}$. Moreover, $\lambda_\beta$ is continuously differentiable in $\beta$.
% :=\frac{\partial}{\partial\beta'}\Delta W^{\bbeta}(x_{0:K},x_{0:K}')|_{\beta'=\beta}
% :=
% \dot W^\beta(x_{0:K}')-\dot W^\beta(x_{0:K})
% \]
% Moreover $\lambda(\beta)$ is continuous in $\beta$. 
\end{lemma}

Given an annealing schedule $0=\beta_0<\cdots<\beta_N=1$, let $\bbeta_n=(\beta_{n-1},\beta_n)$ and $\Delta\beta_n=\beta_{n}-\beta_{n-1}$. It follows from \Cref{lem:local-barrier} that the sum of the rejection satisfies,
\[
\left|\sum_{n=1}^Nr(\sP^{\bbeta_{n}},\sQ^{\bbeta_n})-\Lambda\right|
\leq C\sum_{n=1}^N \Delta\beta_n^2\leq C\max_{n\leq N}|\Delta\beta_n|.
\]
Therefore as $N\to\infty$ and $\max_{n\leq N}|\Delta\beta_n|\to 0$ we have the sum of the rejection rates converge to $\Lambda$.

For the same annealing schedule define $\tau^N(\beta_{0:N})$ as 
\[
\tau^N(\beta_{0:N}):=\left(2+2\sum_{n=1}^N\frac{r(\sP^{\bbeta_n},\sQ^{\bbeta_n})}{1-r(\sP^{\bbeta_n},\sQ^{\bbeta_n})}\right)^{-1}
\] 
By \Cref{lem:local-barrier} we have,
\[
r(\sP^{\bbeta_{n}},\sQ^{\bbeta_n})
\leq\frac{r(\sP^{\bbeta_{n}},\sQ^{\bbeta_{n}})}{1-r(\sP^{\bbeta_{n}},\sQ^{\bbeta_{n}})}
\leq\frac{r(\sP^{\bbeta_{n}},\sQ^{\bbeta_n})}{1-\max_{n\leq N}r(\sP^{\bbeta_{n}},\sQ^{\bbeta_{n}})} \leq \frac{r(\sP^{\bbeta_{n}},\sQ^{\bbeta_{n}})}{1-\sup_{\beta}\lambda_\beta \max_{n\leq N}|\Delta\beta_n|}.
\]
Therefore, by taking the sum over $n$ and using the squeeze theorem, we have in the limit $N\to\infty$ and $\max_{n\leq N}|\Delta\beta_{n}|\to 0$,
\[
\lim_{N\to\infty}\sum_{n=1}^N\frac{r(\sP^{\bbeta_n},\sQ^{\bbeta_n})}{1-r(\sP^{\bbeta_n},\sQ^{\bbeta_n})}=\Lambda
\]
and hence $\lim_{N\to\infty}\tau^N(\beta_{0:N})=(2+2\Lambda)^{-1}$, which completes the proof.
\begin{proof}[Proof of \Cref{lem:local-barrier}]
% Note that $r(\sP^{\bbeta},\sQ^{\bbeta})=1-\sP^{\bbeta}\otimes \sQ^{\bbeta}[\alpha^{\bbeta}]$, where $\alpha^{\bbeta}:\mathcal{X}^{K+1}\times\mathcal{X}^{K+1}\to [0,1]$ equals,
% \begin{align*}
%     \alpha^{\bbeta} &:= \exp(\min\{0,\Delta W^{\bbeta}\}).
% \end{align*}

Since $\bbeta\mapsto W^{\bbeta}$ is twice differentiable, we have that $\bbeta\mapsto \alpha^{\bbeta}$ is twice differentiable when $\Delta W^{\bbeta}\neq 0$, with first-order partial derivatives with respect to $\beta'$:
\begin{align*}
     \partial_{\beta'}\alpha^{\bbeta} &= \partial_{\beta'}\Delta W^{\bbeta}\exp(\Delta W^{\bbeta})1[\Delta W^{\bbeta}<0],
\end{align*}
and the second-order partial derivative with respect to $\beta'$:
\begin{align*}
    \partial_{\beta'}^2\alpha^{\bbeta} &= [\partial_{\beta'}^2\Delta W^{\bbeta}+(\partial_{\beta'}\Delta W^{\bbeta})^2]\exp(\Delta W^{\bbeta}) 1[\Delta W^{\bbeta}< 0].
\end{align*}

By Taylor's theorem, for $\Delta\beta> 0$, we have 
\begin{align*}
\alpha^{\bbeta} = 1+\dot{\alpha}^\beta\Delta\beta +\epsilon^{\bbeta},
\end{align*}
where $\dot{\alpha}^{\beta}=\lim_{\beta'\to \beta^{+}}\partial_{\beta'}\alpha^{\bbeta}$ and $|\epsilon^{\bbeta}|\leq \frac{1}{2}\sup_{\bbeta}|\partial_{\beta'}^2\alpha^{\bbeta}|\Delta\beta^2$. Therefore, the rejection rate equals:
\begin{align}\label{eq:rejection-estimate-taylor}
r(\sP^{\bbeta},\sQ^{\bbeta}) = -\sP^{\bbeta}\otimes \sQ^{\bbeta}[\dot{\alpha}^\beta]\Delta\beta-\sP^{\bbeta}\otimes \sQ^{\bbeta}[\epsilon^{\bbeta}].
\end{align}

We will first approximate the first term in \Cref{eq:rejection-estimate-taylor}. Note that since $\Delta W^{\beta}=0$ and $\partial_{\beta'}\Delta W^{\bbeta}=\lim_{\beta'\to\beta}\Delta W^{\bbeta}/\Delta\beta=:\Delta\dot{W}^{\bbeta}$, we have that $\dot{\alpha}^{\beta}$ satisfies:
\begin{align*}
    \dot{\alpha}^\beta
&= \lim_{\beta'\to\beta} \partial_{\beta'}\Delta W^{\bbeta}\exp(\Delta W^{\bbeta})1\left[\frac{\Delta W^{\bbeta}}{\Delta\beta}<0\right]\\
&= \Delta \dot{W}^\beta 1[\Delta\dot{W}^\beta<0]\\
&= -|\Delta\dot{W}^\beta|1[\Delta \dot{W}^\beta<0].
\end{align*}
We can bound $\dot{\alpha}^{\beta}$ uniformly in $\beta$ in terms of $\bar{W}_1$:
\begin{align*}
|\dot{\alpha}^{\beta}(x_{0:K},x_{0:K}')|
&\leq |\dot{W}^\beta(x_{0:K})-\dot{W}^\beta(x_{0:K}')|\\
&\leq \bar{W}_{1}(x_{0:K})+\bar{W}_1(x_{0:K}')\\
&:=\bar{W}_1 \oplus \bar{W}_1(x_{0:K},x_{0:K}').
\end{align*}
It follows from \Cref{lem:product-rule} that $\beta'\mapsto\sP^{\bbeta}\otimes\sQ^{\bbeta}[\dot{\alpha}^\beta]$ is differentiable in $\beta'$. By \Cref{lem:product-rule} and the mean value theorem, there exists $\tilde{\bbeta}=(\beta,\tilde{\beta}')$ with $\beta\leq \tilde{\beta}'\leq\beta'$ such that
\begin{align*}
\frac{\sP^{\bbeta}\otimes \sQ^{\bbeta}[\dot{\alpha}^{\beta}]-\sP^{\beta}\otimes \sQ^{\beta}[\dot{\alpha}^\beta]}{\Delta\beta}
&=\partial_{\beta'}(\sP^{\bbeta}\otimes \sQ^{\bbeta}[\dot{\alpha}^{\beta}])|_{\bbeta=\tilde{\bbeta}}\\
&=\partial_{\beta'}\sP^{\bbeta}\otimes\sQ^{\bbeta}[\dot{\alpha}^\beta]|_{\bbeta=\tilde{\bbeta}}+\sP^{\bbeta}\otimes\partial_{\beta'}\sQ^{\bbeta}[\dot{\alpha}^\beta]|_{\bbeta=\tilde{\bbeta}}.
\end{align*}
Using the triangle inequality and $|\dot{\alpha}^\beta|\leq \bar{W}_1\oplus\bar{W}_1$, we have:
\begin{align*}
    \left|\frac{\sP^{\bbeta}\otimes \sQ^{\bbeta}[\dot{\alpha}^{\beta}]-\sP^{\beta}\otimes \sQ^{\beta}[\dot{\alpha}^\beta]}{\Delta\beta}\right|
    &\leq \sup_{\bbeta}|\partial_{\beta'}\sP^{\bbeta}|\otimes\sQ^{\bbeta}[\bar{W}_1 \oplus \bar{W}_1]\\
    &\quad+\sup_{\bbeta}\sP^{\bbeta}\otimes|\partial_{\beta'}\sQ^{\bbeta}|[\bar{W}_1 \oplus \bar{W}_1]\\
    &=\sup_{\bbeta}\left(|\partial_{\beta'}\sP^{\bbeta}|[1]\sQ^{\bbeta}[\bar{W}_1]+|\partial_{\beta'}\sP^{\bbeta}|[\bar{W}_1]\sQ^{\bbeta}[1]\right)\\
    &\quad +\sup_{\bbeta}\left(\sP^{\bbeta}[1]|\partial_{\beta'}\sQ^{\bbeta}|[\bar{W}_1]+\sP^{\bbeta}[\bar{W}_1]|\partial_{\beta'}\sQ^{\bbeta}|[1]\right).
\end{align*}

Since $1$ and $\bar{W}_1$ are in $\mathcal{F}_1$, we have that each of the terms on the right-hand side is continuous and hence by the extreme value theorem, there exists $C_1$ such that 
\[
|\sP^{\bbeta}\otimes \sQ^{\bbeta}[\dot{\alpha}^{\beta}]-\sP^{\beta}\otimes \sQ^{\beta}[\dot{\alpha}^\beta]|\leq C_1\Delta\beta.
\]

For the second term in \Cref{eq:rejection-estimate-taylor}, we have 
\begin{align*}
|\sP^{\bbeta}\otimes\sQ^{\bbeta}[\epsilon^{\bbeta}]|
&\leq \frac{\Delta\beta^2}{2}\sP^{\bbeta}\otimes\sQ^{\bbeta}[\sup_{\bbeta}|\partial_{\beta'}^2\alpha^{\bbeta}|]\\
&\leq \frac{\Delta\beta^2}{2}\sP^{\bbeta}\otimes\sQ^{\bbeta}[\bar{W}_{2}\oplus\bar{W}_2+(\bar{W}_{1}\oplus\bar{W}_1)^2]\\
&\leq \frac{\Delta\beta^2}{2}\sup_{\bbeta} \sP^{\bbeta}\otimes\sQ^{\bbeta}[\bar{W}_{2}\oplus\bar{W}_2+(\bar{W}_{1}\oplus\bar{W}_1)^2]\\
&:= C_2\Delta\beta^2,
\end{align*}
where $\bar{W}_{2}\oplus\bar{W}_{2}(x_{0:K},x_{0:K}'):=\bar{W}_{2}(x_{0:K})+\bar{W}_{2}(x_{0:K}')$.
\Cref{assump:regularity-1} guarantees that the expectation in the second-to-last line is continuous in $\bbeta$, and hence $C_2$ is finite. Next, we note that since $|\Delta\dot{W}^\beta(x_{0:K},x_{0:K}')|=|\Delta\dot{W}^\beta(x_{0:K}',x_{0:K})|$ is symmetric and $\sP^{\beta}=\sQ^{\beta}$, we have:
\begin{align*}
\sP^{\beta}\otimes \sQ^{\beta}[\dot{\alpha}^\beta]
&= \sP^{\beta}\otimes \sQ^{\beta}[-|\Delta\dot{W}^\beta|1[\Delta\dot{W}^\beta<0]]\\
&=- \frac{1}{2}\sP^{\beta}\otimes \sQ^{\beta}[|\Delta\dot{W}^\beta|]\\
&= -\lambda_\beta.
\end{align*}

By \Cref{lem:product-rule}, $\beta\mapsto\lambda_\beta$ is continuously differentiable. Since $\lambda_\beta\Delta\beta$ is a right Riemann sum for the integral of $\lambda_\beta$ with error:
    \[
    \left|\lambda_\beta\Delta\beta-\int_{\beta}^{\beta'}\lambda_b\,\ddd b\right|=\frac{1}{2}\sup_{\beta'}\left|\frac{\ddd\lambda_\beta}{\ddd\beta}\right|\Delta\beta^2=:C_3\Delta\beta^2.
    \]
Finally, by the triangle inequality:
\[
\left|r(\sP^{\bbeta},\sQ^{\bbeta})-\int_{\beta}^{\beta'}\lambda_b\,\ddd b\right|\leq C\Delta\beta^2,
\]
for $C:=C_1+C_2+C_3$.
\end{proof}

\subsection{Accelerated PT as Vanilla PT on Extended Space}\label{app:apt_is_pt}
\newcommand{\piex}{\pi_{\operatorname{ex}}}
In this section we establish the theoretical relationship between accelerated PT and vanilla PT.
%Vanilla PT enjoys a linear structure of the sequence of distributions $\pi^0, \ldots, \pi^N$ which naturally defines a \textit{path}.
%In contrast, such a notion of path is less immediate for accelerated PT due to the presence of forward and backward kernels.
We state an equivalence between accelerated PT and a particular vanilla PT problem which ``linearises'' it.
More concretely, we define a sequence of distributions $\piex^0, \ldots, \piex^N$ supported on an extended space, such that if we run \textit{vanilla} PT on it, we obtain the same round trip rate as if we ran \textit{accelerated} PT on $\pi^0, \ldots, \pi^N$.

We consider the case $K=1$ and simplify the notations of forward and backward kernels to $P^n(x^{n-1}, \ddd x^n)$ and $Q^{n-1}(x^n, \ddd x^{n-1})$.
This does not incur any loss of generality since conceptually multiple Markov steps can be collapsed into one.

Given a sequence of distributions $\pi^0, \ldots, \pi^N$ each supported on $\mathcal X$, define the distributions $\piex^n$ as:
\begin{equation}
\label{vanilla_pt_equivalence}
\piex^n(\ddd x^0, \ldots, \ddd x^N) := \pi^n(\ddd x^n) \prod_{i \geq n+1} P^i(x^{i-1}, \ddd x^i) \prod_{j \leq n-1} Q^j(x^{j+1}, \ddd x^j).
\end{equation}
In particular we stress that the distributions $\piex^n$ are supported on $\mathcal X^{N+1}$ and not $\mathcal X$.

The following proposition establishes the isometry between accelerated PT on $\pi^1, \ldots, \pi^N$ and vanilla PT on $\piex^1, \ldots, \piex^N$. Recall that we use $r(\mu_1, \mu_2)$ to denote the rejection between two distributions $\mu_1$ and $\mu_2$.
\begin{proposition}\label{prop:APT-is-PT}
For all $1 \leq n \leq N$, we have
	$r(\pi^{n-1} \times P^n,Q^{n-1} \times \pi^{n}) = r(\piex^{n-1},\piex^n)$.
\end{proposition}
\begin{proof}
	Since the rejection rate only depends on the Radon-Nikodym derivative, it suffices to verify that
	\[\frac{\ddd \piex^{n-1}}{\ddd \piex^n}(x^0, \ldots, x^N) = \frac{\ddd (\pi^{n-1} \times P^n)}{\ddd (Q^{n-1} \times \pi^n)}(x^{n-1}, x^n)\]
	which is straightforward from \Cref{vanilla_pt_equivalence}.
\end{proof}
This proposition shows that Accelerated PT outperforms traditional PT in two ways:
\begin{itemize}
	\item First, while traditional PT bridges $\pi^0$ and $\pi^N$, accelerated PT bridges $\piex^0$ and $\piex^N$ which can be much closer to each other if the forward and backward kernels are good;
	\item In addition, accelerated PT inserts $N-1$ distributions between $\piex^0$ and $\piex^N$.
\end{itemize}

\subsubsection{Parallelism versus Acceleration Time}
\label{apx:p_vs_a}
Given two distributions $\pi^{n-1}$ and $\pi^n$, should we apply $K$-step forward and backward kernels; or insert $K-1$ intermediate distributions $(\pi^{n-1,k})_{k=1}^{K-1}$ and use only one-step forward and backward kernels instead? 

By \Cref{prop:rtr-formula}, the inverse of the local round trip rate between $\pi^{n-1}$ and $\pi^n$ for the first method (\textit{time-accelerated}) is
\newcommand{\rtimea}{r(\sP^{n-1}_K, \sQ^n_K)}
\[ \tau^{-1}_{\textrm{TA}}:= 2 + 2\frac{\rtimea}{1 - \rtimea}. \]
The inverse of the local round trip rate between $\pi^{n-1}$ and $\pi^n$ for the second method (\textit{parallel-accelerated}) is
\newcommand{\rpar}{r(\pi^{n-1, k-1} \times P^{n-1}_k, Q^n_{k-1} \times \pi^{n-1,k})}
\[ \tau^{-1}_{\textrm{PA}}:= 2 + 2\sum_{k=1}^{K}\frac{\rpar}{1 - \rpar} \]
where we make the convention $\pi^{n-1,0} \equiv \pi^{n-1}$ and $\pi^{n-1,K} \equiv \pi^n$.

The following proposition analyses these local rates for both methods. As in Appendix \ref{app:apt_is_pt}, the main idea is to find a vanilla PT equivalent for both algorithms.
We define
\[ \tau^{-1}_{\textrm{VA}}(\mu_0, \ldots, \mu_K):= 2 + 2\sum_{k=1}^K \frac{r(\mu_{k-1}, \mu_k)}{1 - r(\mu_{k-1}, \mu_k)}  \]
as the inverse round trip rate of a vanilla PT algorithm on a sequence of distributions $\mu_0, \ldots, \mu_K$.

\begin{proposition}
	Define the sequence of distributions $(\sS_k)_{k=0}^K$ as
	\begin{multline*}
	\sS_k(\ddd x_0, \ddd x_1, \ldots, \ddd x_K):= \pi^{n-1,k}(\ddd x_k) \times 
	 \prod_{i \geq k+1} P^{n-1}_{i}(x_{i-1}, \ddd x_i) \prod_{j \leq k-1}
	 Q^{n}_{j}(x_{j+1}, \ddd x_j).
	\end{multline*}
Then the following equalities hold
\begin{align}
\label{tauta}
	 \tau_{\textrm{TA}} &= \tau_{\textrm{VA}}(\sS_0, \sS_K) \\
\label{taupa}
	 \tau_{\textrm{PA}} &= \tau_{\textrm{VA}}(\sS_0, \sS_1, \ldots, \sS_K).
\end{align}
\end{proposition}
\begin{proof}
	The first point is straightforward. To show the second point, we need to check that
    \[ \rpar = r(\sS_{k-1}, \sS_k). \]
    Note that the rejection rates only depend on the Radon-Nikodym derivatives, so it suffices to verify that
\[
\frac{\ddd \sS_k}{\ddd \sS_{k-1}}
(x_0, x_1, \ldots, x_K) =
\frac
	{\ddd (Q^n_{k-1} \times \pi^{n-1,k})}
	{\ddd (\pi^{n-1,k-1} \times P^{n-1}_k)}
(x_{k-1}, x_k)
\]
which is straightforward from the definition of $(\sS_k)_{k=0}^K$.
\end{proof}
This proposition shows that it is preferable to use the parallel-accelerated method, as the quantity in \Cref{taupa} is generally greater than that of \Cref{tauta} thanks to the effect of the bridge between $\sS_0$ and $\sS_K$.
However, in practice the time-accelerated method consumes less memory and so might be more suitable in certain circumstances.

\section{Schedule Tuning}\label{app:tune}

Under Assumption \ref{assump:ELE} (efficient local exploration), by Proposition \ref{prop:rtr-formula}, we wish to optimise our schedule $0=\beta_0<\ldots<\beta_N=1$ to minimise 
\[
    \sum_{n=1}^N \frac{r(\sP^{n-1}_K,\sQ^{n}_K)}{1-r(\sP^{n-1}_K,\sQ^{n}_K)},
\]
in order to maximise the round trip rate of APT. 
Additionally, by Theorem \ref{thm:barrier}, we have $\sum_{n=1}^Nr(\sP^{n-1}_K,\sQ^{n}_K)\approx\Lambda_K$.
Therefore, a reasonable proxy objective is to minimise $\sum_{n=1}^N r_n/(1-r_n)$ under the constraint that $\sum_{n=1}^N r_n = \Lambda_K$ and $r_n>0$.
Indeed this is same reasoning employed in \cite[Section 5]{syed2022non} to derive their schedule tuning algorithm.

Thus, following \cite{syed2022non}, the solution to the proxy objective is to ensure the $r_n$ are constant in $n$.
This provides a natural objective to choose our schedule in such a way which results in the theoretical rejection rates $r(\sP^{n-1}_K,\sQ^{n}_K)$ being uniform for all $n$.
Moreover, while we do not have access to $r(\sP^{n-1}_K,\sQ^{n}_K)$, we can estimate these quantities empirically through computing the average rejection rates $\hat{r}_K^n$ observed in running \Cref{alg:APT}. 
Specifically, during sampling, we keep track of the average $\hat{\alpha}_K^n$ of the computed acceptance probabilities $\alpha_K^n$ for $n=1, \ldots, N$. The estimated rejection rates $\hat{r}_K^n\approx r(\mathbb{P}_K^{n-1}, \mathbb{Q}_K^n)$ are then given by $\hat{r}_K^n = 1-\hat{\alpha}_K^n$.

Finally, to achieve our above goal, we can use the same schedule tuning algorithm from \cite[Section 5]{syed2022non}. 
For completeness, we reproduce this in Algorithm \ref{alg:tune}.

\begin{algorithm}
\caption{Schedule Tuning}\label{alg:tune}
\begin{algorithmic}[1]
\Require Initial schedule $0=\beta_0<\ldots<\beta_N=1$, number of tuning steps $m$, number of sampling steps $T_{\text{tune}}$;
    \For{$i=1,\dots, m$}
        \State Run \Cref{alg:APT} for $T_\text{tune}$ steps and compute the average rejection rates $\{\hat{r}_K^n\}_{n=1}^N$.
        \State Set $\Lambda_K(n) = \sum_{j=1}^n r_K^n$ for $n=1, \ldots, N$ and $\Lambda_K(0)=0$.
        \State $S\gets \{(\frac{\Lambda_K(n)}{\Lambda_K(N)},\beta_n)\}_{n=0}^N$
        
        % The wrapped line fix starts here
        \State \parbox[t]{\dimexpr\linewidth-\algorithmicindent}{
            Fit a monotonically increasing interpolation $\varphi(u)$ (e.g. spline) 
            of the points in $S$ which satisfies the boundary conditions: 
            $\varphi(0)=0$ and $\varphi(1)=1$.\strut
        }
        
        \State $\beta_n \gets \varphi^{-1}\left(\frac{n}{N}\right)$ 
    \EndFor
\Ensure $0=\beta_1<\ldots<\beta_N=1$
\end{algorithmic}
\end{algorithm}

\section{Further Details on Design-Space for Accelerated PT}\label{app:design-apt}
\subsection{Further Details on Flow APT}\label{app:flow-apt}

\subsubsection{Work Formula for Deterministic flows}
\label{proof:apt_deterministic}
\begin{proposition}\label{prop:deterministic_swap}
	Let $\mathcal X = \mathbb R^d$ and suppose that $\pi^{n-1}$ and $\pi^n$ admit strictly positive densities $\tilde \pi^{n-1}$ and $\tilde \pi_n$ with respect to the Lebesgue measure. Let $T^n:\mathbb R^d\to \mathbb R^d$ be a diffeomorphism with Jacobian matrix $J_{T_n}(x)$. If we choose the one-step forward and backward kernels $P^{n-1}$ and $Q^{n}$ such that 
	\begin{equation*}
	P^{n-1}(x^{n-1}, \ddd x^{n}_*) = \delta_{ T^n(x^{n-1})}(\ddd x^{n}_*),\qquad
	Q^{n}(x^{n}, \ddd x^{n-1}_*) = \delta_{ (T^n)^{-1}(x^{n})}(\ddd x^{n-1}_*),
	\end{equation*} 
then, for all $(z^{n-1}, z^n)$ such that $T^n(z^{n-1})=z^n$, we have the following expression for the weight defined in \Cref{def:work_apt}:
\begin{equation*}
	w^n(z^{n-1}, z^n) = \frac{\tilde{\pi}^n(z^n)}{\tilde{\pi}^{n-1}(z^{n-1})}|\det J_{T^n}(z^{n-1})|
    % U^n(z^n) - U^{n-1}(z^{n-1}) - \log |\det J_{T^n}(z^{n-1})|;
\end{equation*}
and the acceptance rate $\alpha^{n}$ defined in \Cref{eq:swap_proba_gen} becomes
\begin{equation}\label{eq:deterministic_swap_proba}
     \alpha^n (x^{n-1}, x^{n}; x^{n-1}_*, x^{n}_*) = 1 \wedge \left[ \frac{\tilde\pi^{n-1}(x^{n-1}_*)\tilde\pi^n(x^n_*)}{\tilde\pi^{n-1}(x^{n-1})\tilde\pi^n(x^n)} \cdot \frac{|\det(J_{T^n} (x^{n-1}))|}{ |\det(J_{T^n}(x^{n-1}_*))|} \right]. 
\end{equation}
\end{proposition}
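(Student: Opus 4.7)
The plan is to exploit the fact that both $\bar\pi_{n-1}^P$ and $\bar\pi_n^Q$ are Dirac-type measures concentrated on the graph of $F_n$, so that the Radon--Nikodym derivative appearing in \Cref{eq:swap_proba_gen} reduces to an ordinary change of variables on $\mathbb{R}^d$. Since the two joint measures are singular with respect to Lebesgue measure on $\mathbb{R}^{2d}$, we cannot simply divide two-dimensional densities; the key observation is that they are singular with respect to the \emph{same} graph, which can be parameterised once and for all by the map $\Phi_n: x\mapsto (x, F_n(x))$.

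Concretely, I would first verify from \Cref{eq:two_bar} that $\bar\pi_{n-1}^P = (\Phi_n)_{\#}\pi_{n-1}$ and, using the identity $\Phi_n\circ F_n^{-1}(y)=(F_n^{-1}(y), y)$, that $\bar\pi_n^Q = (\Phi_n)_{\#}(F_n^{-1})_{\#}\pi_n$. The standard Lebesgue change of variables then gives the density of $(F_n^{-1})_{\#}\pi_n$ as $x\mapsto \tilde\pi_n(F_n(x))\,|\det J_{F_n}(x)|/Z_n$, and strict positivity of $\tilde\pi_{n-1}$ and $\tilde\pi_n$ makes this measure mutually absolutely continuous with $\pi_{n-1}$. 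Pushing forward through $\Phi_n$ preserves absolute continuity, verifying the hypothesis of \Cref{prop:validity_swap_gen}, and yields
\[
\frac{\ddd\bar\pi_n^Q}{\ddd\bar\pi_{n-1}^P}\bigl(\Phi_n(x)\bigr) \;=\; \frac{Z_{n-1}}{Z_n}\cdot \frac{\tilde\pi_n(F_n(x))\,|\det J_{F_n}(x)|}{\tilde\pi_{n-1}(x)}.
\]

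Finally I would substitute into \Cref{eq:swap_proba_gen}. The forward proposal forces $x^n_*=F_n(x^{n-1})$ and the backward proposal $x^{n-1}_*=F_n^{-1}(x^n)$, so both evaluation points lie on the graph and satisfy $(x^{n-1}, x^n_*)=\Phi_n(x^{n-1})$ and $(x^{n-1}_*, x^n)=\Phi_n(x^{n-1}_*)$; forming the ratio cancels the factor $Z_{n-1}/Z_n$ and rearranges into \Cref{eq:deterministic_swap_proba}. I expect the main obstacle to be the measure-theoretic subtlety described above, since naively substituting product densities would give a nonsensical answer; once one commits to the common-parameterisation viewpoint, the remainder is routine bookkeeping.
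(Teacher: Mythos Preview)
Your proposal is correct and follows essentially the same approach as the paper: both arguments recognise that $\bar\pi_{n-1}^P$ and $\bar\pi_n^Q$ are concentrated on the graph of $F_n$ and reduce the Radon--Nikodym derivative to an ordinary change of variables on $\mathbb{R}^d$, then evaluate at the two graph points $(x^{n-1},x^n_*)$ and $(x^{n-1}_*,x^n)$. The only cosmetic difference is that you parameterise the graph by the first coordinate via $\Phi_n$, whereas the paper parameterises by the second coordinate by rewriting $\bar\pi_{n-1}^P$ as $(F_n\#\pi_{n-1})(\ddd z^n)\,Q_{n-1}(z^n,\ddd z^{n-1})$; the resulting computation and final formula are identical.
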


\begin{proof}
Let $S:= \{ (z^{n-1}, z^n) \in \mathbb R^d \times  \mathbb R^d \textrm{ such that } T^n(z^{n-1}) = z^n \}$. 
	Recall the definition of the extended measures $\sP^{n-1}$ and $\sQ^n$:
	\begin{align*}
	\sP^{n-1}(\ddd z^{n-1}, \ddd z^n) &= \pi^{n-1}(\ddd z^{n-1}) P^{n-1}(z^{n-1}, \ddd z^n),\\
	\sQ^n(\ddd z^{n-1}, \ddd z^{n}) &= \pi^{n}(\ddd z^{n}) Q^n(z^{n}, \ddd z^{n-1}).
	\end{align*}
	Moreover for $(z^{n-1}, z^n) \in S$,
	\[ \sP^{n-1}(\ddd z^{n-1}, \ddd z^{n}) = (T^n \# \pi^{n-1})(\ddd z^{n}) Q^{n}(z^{n}, \ddd z^{n-1}). \]
	Therefore
	\begin{multline}
    \label{eq:rd_rd}
 \frac{\ddd \sQ^n}{\ddd \sP^{n-1}}(z^{n-1}, z^{n}) = \frac{\pi^{n}(\ddd z^{n})}{(T^n \# \pi^{n-1})(\ddd z^{n})}\\ = \frac{\pi^{n}(z^n)}{\pi^{n-1}((T^n)^{-1}(z^{n})) |\det(J_{(T^n)^{-1}}(z^{n}))|} = \frac{\pi^{n}(z^n) |\det (J_{T^n}(z^{n-1}))|}{\pi^{n-1}(z^{n-1})}
	\end{multline}
	which justifies the identity for the weight.
	Applying this at $(z^{n-1}, z^n) = (x^{n-1}, x^n_*) \in S$ gives
    \begin{equation}
    \label{eq:rd_up}
	\frac{\ddd \sQ^n}{\ddd \sP^{n-1}}(x^{n-1}, x^{n}_*) = 
    \frac{\pi^{n}(x^n_*) |\det (J_{T^n}(x^{n-1}))|}{\pi^{n-1}(x^{n-1})}.
    \end{equation}
    Similarly, applying~\Cref{eq:rd_rd} at $(z^{n-1}, z^n) = (x^{n-1}_*, x^n) \in S$ gives
    \begin{equation}
    \label{eq:rd_down}
    \frac{\ddd \sQ^n}{\ddd \sP^{n-1}}(x^{n-1}_*, x^{n}) = 
    \frac{\pi^{n}(x^n) |\det (J_{T^n}(x^{n-1}_*))|}{\pi^{n-1}(x^{n-1}_*)}.
    \end{equation}
	Together~\Cref{eq:rd_up} and~\Cref{eq:rd_down} establish the proposition.
\end{proof}

\subsubsection{Training}
Since APT provides approximate samples from both the target and reference densities for each flow, it enables a range of training objectives. Some choices include maximum likelihood estimation (MLE) (equivalent to forward KL), reverse KL, and symmetric KL (SKL), which averages the two. Each has trade-offs: forward KL promotes mode-covering, reverse KL is more mode-seeking, and SKL balances both. APT’s parallel structure makes SKL particularly effective by providing access to samples at each intermediate annealing distribution, a feature many other methods lack. For example, sequential Monte Carlo (SMC)-based approaches such as FAB \citep{midgley2022flow} and CRAFT \citep{matthews2023continualrepeatedannealedflow} rely on samples from only one side, limiting their choice of loss functions.

Additionally, one can explore loss functions based on APT’s rejection rates, such as the analytic round trip rate, see \Cref{prop:rtr-formula}. Since higher round trip rates indicate more efficient mixing, optimizing for this metric improves sampling performance. Empirically, we found that using SKL yielded the most stable and robust results across different settings. This loss was therefore used in our final experiments. However, we leave it to future work to study other possible losses in more detail.

There are also possible variants to the training pipeline. In each case, we initialize normalizing flows to the identity transformation. One option is to run the APT algorithm with the current flows. After every (or several) steps of the APT algorithm, the current samples can be used to update the flow parameters. Since the flows are initialized at the identity transformation, the initial sampling of APT behaves similarly to PT. A second possible training pipeline is to instead use PT directly to generate a large batch of samples, and then use this batch of samples to update the flows. The latter approach is more stable and is thus what we employed in the final experiments. We provide more details in Appendix \ref{app:exp_details}. We note that the first training pipeline has the potential to allow for better exploration, and we therefore leave a more detailed exploration of it to future work.

\subsection{Further Details on Control Accelerated PT}\label{app:cmcd-apt}

At the limit $K\rightarrow\infty$, following 
\citep[Lemma B.7]{berner2025discrete}, and by the controlled Crooks fluctuation theorem \citep{vaikuntanathan2008escorted,nusken2024transport}, we arrive at the generalised work functional
\begin{align*}
W^{n}_\infty(x)
&:=\log w^n_{\infty}(x)\\
&=\int_0^1 \nabla \cdot b^{n}_s(x_s)+\nabla U^{n}_s(x_s)\cdot b^{n}_s(x_s)+\partial_s U^{n}_s(x_s)\ddd s,
\end{align*}
inducing the corresponding continuous processes $(X_s)_{s\in[0,1]}$, $(X_s')_{s\in[0,1]}$ for $\sP^{n-1}_{\infty}$, $\sQ^n_{\infty}$ respectively, that is for the forward process
\[
X_0\sim\pi^{n-1},\quad \ddd X_s = 
(\sigma^n_s)^2\nabla U^{n}_s(X_s)\ddd s +b_s^{n}(X_s)\ddd s+ \sigma^n_s \sqrt{2}\ddd\fwd{B}_s,
\]
and for the backwards process 
\[
X'_1\sim \pi^{n},\quad \ddd X'_s = -(\sigma^n_s)^2\nabla U^{n}_s(X'_s)\ddd s +b_s^{n}(X'_s)\ddd s+ \sigma_s^{n}\sqrt{2}\ddd\bwd{B}_s.
\]

% In this case, t
The training objective   is
% in \Cref{eq:loss_cmcd}
% becomes
\begin{align*}
     \mathcal{L}(b_s, \phi_s, \sigma_s) =\sum_{n=1}^N\mathrm{SKL}(\sP^{n-1}_K,\sQ^{n}_K)\xrightarrow[K\to\infty]{}\sum_{n=1}^N\mathrm{SKL}(\sP^{n-1}_\infty,\sQ^{n}_\infty).
     % &= \mathbb E_{x_{0:K} \sim \sP_K^{n-1}} \left[-W^{n}_K(x_{0:K})\right] + \mathbb E_{x_{0:K} \sim \sQ_K^{n}} \left[W^{n}_K(x_{0:K})\right]\\
% &\xrightarrow[K\to\infty]{} \mathbb E_{\sP_\infty^{n-1}} \left[\ln \frac{\ddd \sP_\infty^{n-1}}{\ddd \sQ_\infty^{n}}\right] + \mathbb E_{\sQ_\infty^{n}} \left[\ln \frac{\ddd \sQ_\infty^{n}}{\ddd \sP_\infty^{n-1}}\right]
\end{align*}
Note that the discrete version discussed in the main text is not the only discretisation choice.
Other options \citep{albergo2024nets,mate2023learning} may also be applied.

\subsection{Further Details on Diff-APT}\label{app:diff-apt}

\paragraph{Annealing path}

We use the following Variance-Preserving (VP) diffusion process 
\[
    dY_s = -\gamma_sY_s \mathrm{d}s + \sqrt{2\gamma_s} \mathrm{d}W_s, \quad Y_0\sim \pi,
\]
with the choice of schedule $\gamma_s = \frac{1}{2(1-s)}$ to define the path of distributions $(\pi^\text{VP}_s)_{s\in(0, 1]}$ by $Y_s\sim\pi_{1-s}^{\text{VP}}$. Due to the singularity at $s=1$, $\pi_0^{\text{VP}}$ is not defined by the path, but we can define this point to be a standard Gaussian as the path converges to this distribution in the limit as $s$ approaches 1 in order to define the full annealing path $(\pi_s^\text{VP})_{s\in[0, 1]}$.

\paragraph{Accelerators}

For a given annealing schedule $0=s_0<\ldots<s_N=1$, we construct $P_k^{n-1}, Q_{k-1}^{n}$ through the linear discretisation of the time-reversal SDE on the time interval $[s_{n-1}, s_n]$ with step size $\delta_n=(s_n-s_{n-1})/K$ and interpolating times $s_{n}^k = s_{n-1} + k\delta_n$.

In particular, we take $Q_{k-1}^n(x_k, dx_{k-1})$ to be $\gN(\sqrt{1-\alpha_{n, k-1}}x_k, \alpha_{n, k-1}\mathrm{I})$ where $\alpha_{n, k-1} = 1  - \exp( -2\int^{1 - s_{n}^{k-1}}_{1 - s_{n} ^{k}} \gamma_s \mathrm{d} s)$ which is the closed-form kernel transporting $\pi_{s_{n}^{k}}^\text{VP}$ to $\pi_{s_{n}^{k-1}}^\text{VP}$. Furthermore, we take $P_k^{n-1}(x_{k-1}, dx_k)$ to be the exponential integrator given by  $\gN(\mu_{n, k-1}(x_{k-1}), \alpha_{n, k-1}\mathrm{I})$ where 
\[
    \mu_{n, k-1}(x) = \sqrt{1-\alpha_{n, k-1}}x + 2(1-\sqrt{1-\alpha_{n, k-1}})(x + \nabla\log \pi_{s_{n}^{k-1}}^\text{VP}(x)).
\]

\paragraph{Network parametrisation}

We parametrise an energy-based model as outlined in \cite{phillips2024particle} but modified to ensure that $\pi^\theta_0(x) \propto N(x; 0, \mathbf{I})$. For completeness, we specifically take $\log \pi^\theta_s(x) = \log g^\theta_s(x) - \frac{1}{2}||x||^2$ where 
\begin{align*}
     \log g_\theta(x, s) &= [r_\theta(1) - r_\theta(s)][r_\theta(s) - r_\theta(0)] \langle N_\theta(x, s), x \rangle \\ +& [1 + r_\theta(1) - r_\theta(s)]\log g_1(\sqrt{s}x),
\end{align*}
%where $g_1(x)\propto \pi(x)N(x; 0, \mathrm{I})$. Here, $r$ is a scalar-valued neural network and $N$ is a vector-valued function in $\R^d$. We also note that $\pi^\theta_1(x)\propto \pi$, hence $\pi^\theta_s$ serves as a valid annealing path between $N(0, \mathrm{I})$ and $\pi$.
where $\pi(x)\propto g_1(x)N(x; 0, \mathbf{I})$. Here, $r$ is a scalar-valued neural network and $N$ is a vector-valued function in $\R^d$. We also note that $\pi^\theta_1(x)\propto \pi$, hence $\pi^\theta_s$ serves as a valid annealing path between $N(0, \mathbf{I})$ and $\pi$.

\section{Experimental Details}\label{app:exp_details}

\subsection{Target Distributions}\label{app:target-dists}

\paragraph{GMM-$d$} 
We take the 40-mode Gaussian mixture model (GMM-2) in 2 dimensions from \cite{midgley2022flow} where to extend this distribution to higher dimensions $d$, we extend the means with zero padding to a vector in $\mathbb{R}^d$ and keep the covariances as the identity matrix but now within $\mathbb{R}^d$. This helps to disentangle the effect of multi-modality from the effect of dimensionality on performance as we essentially fix the structure of the modes across different values of $d$.
Following previous work \citep{akhound2024iterated, phillips2024particle}, we also scale the distribution GMM-$d$ by a factor of 40 to ensure the modes are contained within the range $[-1, 1]^d$ for our experiments with APT, where we also use the same scaling for the PT baseline to ensure a fair comparison.

\paragraph{DW-4} 
We take the DW-4 target from \cite{kohler2020equivariant} which describes the energy landscape for a toy system of 4 particles $\{x_1, x_2, x_3, x_4\}$ and $x_i\in\R^2$ given by
\[
    \pi(x) \propto \exp\left( -\frac{1}{2\tau} \sum_{i\ne j} a(d_{ij} - d_0) + b(d_{ij} - d_0)^2 + c(d_{ij} - d_0)^4 \right),
\]
where $d_{ij}=||x_i - x_j||$ and we set $a=0, b=-4, c=0.9, \tau=1$ in accordance with previous work.

\paragraph{MW-32} 
We take the ManyWell-32 target from \cite{midgley2022flow} formed from concatenating 16 copies of the 2-dimensional distribution
\[
    \hat{\pi}(x_1, x_2) \propto \exp\left(-x_1^4 + 6x_1^2 + \frac{1}{2}x_1 - \frac{1}{2}x_2^2  \right),
\]
i.e. the distribution $\pi(x)=\prod_{i=1}^{16} \hat{\pi}(x_{2i-1}, x_{2i})$ where $x\in\R^{32}$.
Each copy of $\hat{\pi}$ has 2 modes and hence $\pi$ contains in total $2^{16}$ modes.

\paragraph{Alanine Dipeptide}
This is a small molecule with 22 atoms, each of which has 3 dimensions.
The target energy is defined with the \texttt{amber14/protein.ff14SB} forcefield in vacuum using the DMFF library in JAX~\citep{wang2023dmff}.

\subsection{Network and Training Details}\label{app:network}

\paragraph{NF-APT}
For GMM-$d$, we use 20 RealNVP layers where we employ a 2-layer MLP with 128 hidden units for the scale and translation functions \citep{dinh2017densityestimationusingreal}. We initialize the flow at the identity transformation. We use the Adam
optimizer with a learning rate of \texttt{1e-3}, perform gradient clipping with norm $1$ and employ EMA with decay parameter $0.99$.

We use the same training pipeline as for CMCD-APT. See further details in the CMCD-APT description below. Note that we also employ the SKL loss for training and the linear annealing path with a standard Gaussian as the reference distribution.

\paragraph{CMCD-APT}
For both GMM-$d$ and MW-32, we use a 4-layer MLP with 512 hidden units, and for DW-4, we use a 4-layer EGNN \citep{satorras2021n} with 64 hidden units.
Recall that in CMCD-APT, we use the \emph{geometric path}, $\pi^{\beta}(x)=\eta(x)^{1-\beta}\pi(x)^\beta$, which linearly interpolates between reference and target in log-space.
Therefore, our network is conditional on the $\beta$.
We optimise the MLP by Adam with a learning rate of \texttt{1e-3} and EGNN with a learning rate of \texttt{1e-4}.
We additionally use gradient clipping with norm 1 for stability.

Furthermore, our training pipeline following the 3 stages outlined below:
\begin{itemize}
    \item Tuning PT: We initialise $\{\beta_n\}_{n=1}^N$ uniformly from 0 to 1.
    Then, we run PT for 600 steps, remove the first 100 steps as burn-in, and take the last 500 steps to calculate the rejection rate between adjacent chains using this to apply \Cref{alg:tune} to update the schedule.
    %We then tune the value of $\beta_n$ to ensure the rejection rate even, according to \citet{syed2021parallel}.
    We repeat this process 10 times to ensure  $\{\beta_n\}_{n=1}^N$ is stable.
    \item Collecting data and training:
    We use PT with the tuned schedule to collect data.
    For experiments with 5 chains, we run 200K steps to collect 200K samples for each chain.
    For experiments with 10 and 30 chains, we run 65536 steps to collect data.
    We then train CMCD for 100,000 iterations with a batch size of 512.
    In each batch, we randomly select the chain index and samples according to the chain to train CMCD using SKL.
    We repeat this step twice to ensure the network is well-trained until convergence.
    \item Testing:
    We follow \Cref{alg:APT}, running CMCD-APT for 100K steps, and calculate the round trip.
\end{itemize}

\paragraph{Diff-APT} For GMM-$d$, we use a 3-layer MLP with 128 hidden units, for MW-32, we use a 3-layer MLP with 256 hidden units and for DW-4, we use a 3 layer EGNN with 128 hidden units. For all models, we use a learning rate of \texttt{5e-4}, gradient clipping with norm 1 for stability and EMA with decay parameter of 0.99. For training, we follow the same pipeline as CMCD-APT, but we only retain samples at the target chain in order to optimise the standard score matching objective. At sampling time, we tune the annealing schedule by running Diff-APT for 1,100 steps, discarding the first 100 samples as burn-in and using the last 1,000 steps to calculate rejection rates. We then use this to apply \Cref{alg:tune} to update the schedule. We then repeat this 10 times where we initialise from the uniform schedule.

\paragraph{PT} For all experiments with PT, we use the linear path with a standard Gaussian as our reference distribution. We tune the annealing schedule in the same manner as with Diff-APT to ensure comparisons with a strong baseline.

\subsection{Further Details on Comparison of Acceleration Methods}\label{app:compare-acc}

For both training and testing for all methods, we take a single step of HMC with step size of $0.03$ and 5 leapfrog steps as our local exploration step across each annealing chain. We note that while we could have improved performance by tuning the step size for each annealing distribution, we keep this fixed to disentangle the effect of local exploration from our communication steps.
\paragraph{Compute-normalised round trips} The compute-normalised round trips, as reported in Table \ref{tab:gmm_all}, is computed by dividing the original round trips by the number of potential evaluations that a single ``machine'' is required to implement within a parallelised implementation of PT/APT - i.e. the number of potential evaluations required by the computation of $w^n_K(\fwd{X}^{n-1}_{t, 0:K})$ (or equivalently $w_K^n(\bwd{X}_{t, 0:K}^n$) for a single $n$ and $t$. Similarly, we count the number of neural calls in the corresponding manner.
\begin{itemize}
    \item NF-APT: We need 2 potential evaluations for $\pi^{n-1}(\fwd{X}^{n-1}_{t, 0})$ and $\pi^n(\fwd{X}^{n-1}_{t, 1})$ and single network evaluation.
    \item CMCD-APT: For $K>0$, we need to calculate the potential and score\footnote{We count this as a single potential evaluation.} of $\pi^{n-1}$ at $\fwd{X}_{t, 0}^{n-1}$ and $\pi^n$ at $\fwd{X}^{n-1}_{t, K}$. We also need to compute the score of $U^n_{s_k}$ at $\fwd{X}_{t, k}^{n-1}$ for $k=1, \ldots, K-1$. We note that we can reuse all of the above score evaluations for both the forward and reverse transition kernels of $P_k^{n-1}$ and $Q_{k-1}^n$. In total, this requires $K+1$ potential and network evaluations.
    \item Diff-APT: For $K>0$, we require $K+1$ potential and network evaluations following the same logic as for CMCD. For $K=0$, we require 2 potential and network evaluations for $\pi^{n-1}(X^{n-1})$ and $\pi^n(X^{n-1})$ as we parametrise our annealing path in terms of the target distribution $\pi$ and a neural network.
    \item PT: We need 2 potential evaluations for $\pi^{n-1}(X^{n-1})$ and $\pi^n(X^{n-1})$ and we do not require any network evaluation.
\end{itemize}

\subsection{Further Details on Scaling with Dimension}\label{app:dim-scale}

For all methods, we take a single step of HMC with step size of $0.03$ and 5 leapfrog steps as our local exploration step across each annealing chain and take 100,000 samples. 
Additionally, we report the (compute-normalised) round trip rate which involves dividing the (compute-normalised) round trips by the number of sampling steps. We note that we use the same methodology as above for computing compute-normalised round trips.

\subsection{Further Details on Log-Normalising Constant (Free-Energy) Estimation}\label{app:free-energy-est}

For CMCD-APT, we take a single step of HMC with step size 0.22 and 5 leapfrog steps for our local exploration step across each annealing chain for both DW-4 and MW-32. For Diff-APT, we take two steps of HMC with 5 leapfrog steps and step size of 0.2 and 0.22 respectively for DW-4 and MW-32 for our local exploration step across each annealing chain.

For all methods, we generate 100,000 samples at the target distribution. For each estimate of $\Delta F$, we subsample 1,000 samples uniformly without replacement from the 100,000 to compute the APT free energy estimator. This is then repeated 30 times for each method. 

We take the ground truth free energy of DW-4 from estimating $\Delta F$ with the APT estimator using 100,000 samples from PT with 60 chains (after tuning). We take the ground truth free energy of MW-32 from \cite{midgley2022flow} which calculates the normalising constant of a single copy of $\hat{\pi}$ numerically allowing for the trivial computation of the overall normalising constant.

\subsection{Further Details on Comparing APT with Neural Samplers}\label{app:compare-apt-neural}

For CMCD, we take our CMCD-APT model on DW-4 with 30 chains and $K=1, 2, 5$ from Section \ref{sec:free-energy}, and instead of sampling from these models using APT, we collect 5,000 independent samples from our reference distribution to which we apply our learned kernels $\prod_{n=1}^N\prod_{k=1}^K P_k^{n-1}$ which we recall are explicitly trained to transport samples from the reference to the target distribution. With our final samples, we then plot the histogram of $d_{ij}$ values for $i\ne j$---specifically, we note that DW-4 represents a distribution over 4 particles in 2D, therefore for each sample from DW-4, we compute the pairwise distances $d_{ij}$ ($i\ne j$) over these particles in the sample; we then collect all of these values to plot their histogram.

For Diffusion, we follow the same procedure but we tune our annealing schedule with \Cref{alg:tune} applied in the same way as in Appendix \ref{app:network} (this step is not required for CMCD as the annealing schedule is required to be fixed during training) before we collect samples by applying $P_k^{n-1}$.

For CMCD-APT and Diff-APT, we simply sample from each method to generate 5,000 samples at the target distribution before plotting the histogram of $d_{ij}$ values.

\subsection{Further Details on Alanine Dipeptide} \label{app:ad}

We run CMCD-APT using 4 chains and $K=1, 2, 5$.
For the local move, we adopt HMC with a dynamic step size: when the acceptance rate is larger than 0.9, we increase the step size by 1.2; if the acceptance rate is smaller than 0.8, we divide the step size by 1.2.
Other settings are the same as those of other targets.

\subsection{License}
\label{appendix:library}
Our implementation is based on the following codebases:
\begin{itemize}
    \item  \url{https://github.com/lollcat/fab-jax} \citep{midgley2022flow} (MIT License)
    \item \url{https://github.com/noegroup/bgflow} (MIT License)
    \item \url{https://github.com/angusphillips/particle_denoising_diffusion_sampler} \citep{phillips2024particle} (No License)
    \item \url{https://github.com/gerkone/egnn-jax} (MIT License)
\end{itemize}

\subsection{Computing Resources}
\label{appendix:computing}
The experiments conducted in this paper are not resource-intensive. We use a mixture of 24GB GTX 3090 and 80GB A100 GPU, but all experiments can be conducted on a single 80GB A100 GPU.

\section{Additional Experimental Results}

\subsection{Visualisation of ManyWell-32}

In \Cref{fig:mw-vis-cmcd,fig:mw-vis-diff}, we visualise ManyWell-32 samples generated by 1,000 consecutive CMCD-APT and Diff-APT steps with $N=5, 10, 30$ and $K=1, 5$ from ManyWell-32  via marginal projections over the first four dimensions. 
We also show 1,000 independent ground truth samples in \Cref{fig:mw_gt_vis} for comparison.
We choose to report only 1,000 steps to illustrate how fast our sampler mixes.
As we can see, the mode weights become more accurate as we increase $N$ and $K$.

\begin{figure}[H]
    \centering
    \begin{subfigure}{0.49\textwidth}
      \includegraphics[width=1.1\linewidth]{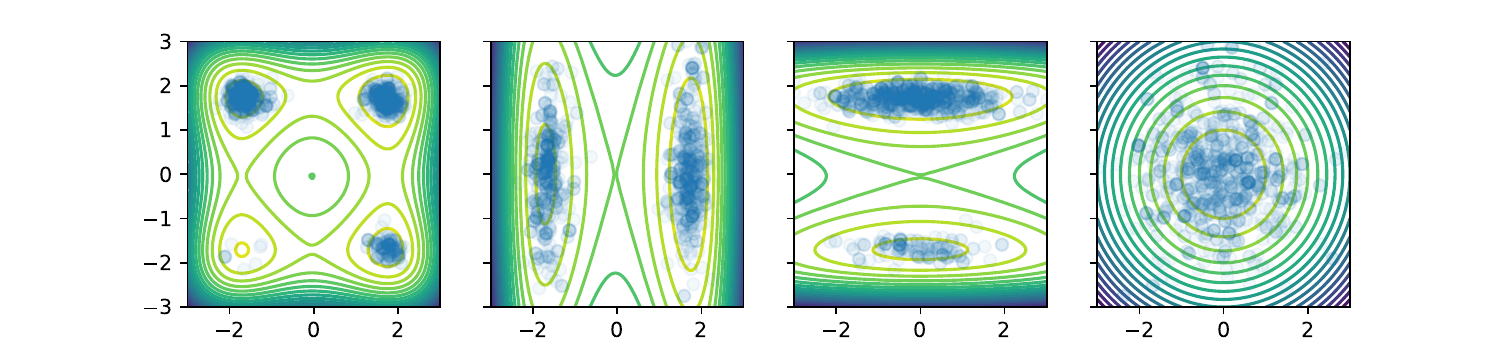}
    \caption{$N=5, K=1$.}  
    \end{subfigure}
    \begin{subfigure}{0.49\textwidth}
      \includegraphics[width=1.1\linewidth]{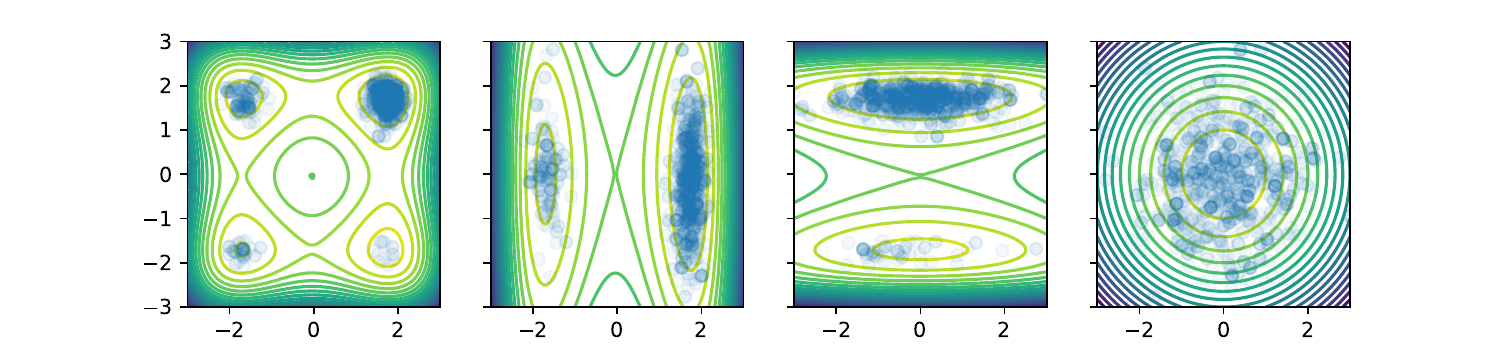}
    \caption{$N=5, K=5$.}  
     \end{subfigure}
    \begin{subfigure}{0.49\textwidth}
     \includegraphics[width=1.1\linewidth]{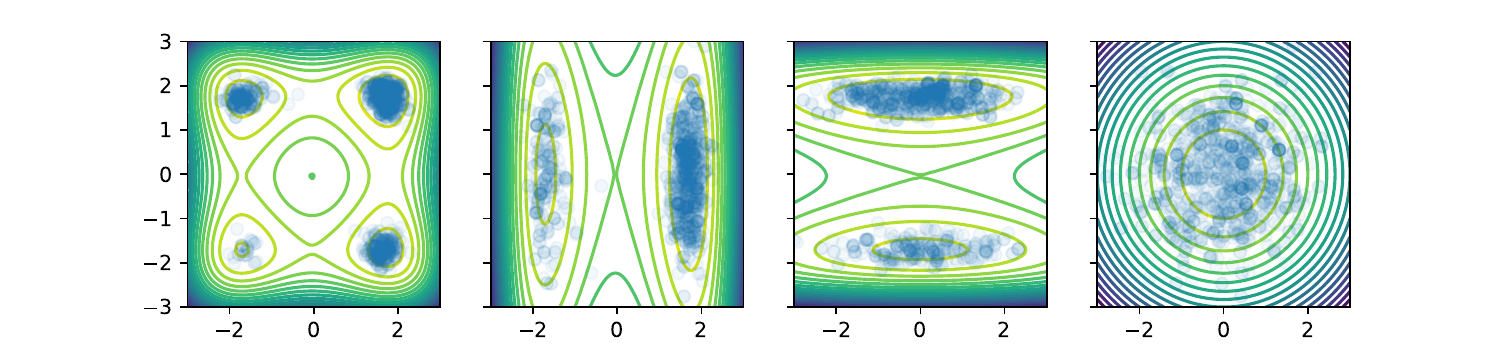}
    \caption{$N=10, K=1$.}  
    \end{subfigure}
    \begin{subfigure}{0.49\textwidth}
      \includegraphics[width=1.1\linewidth]{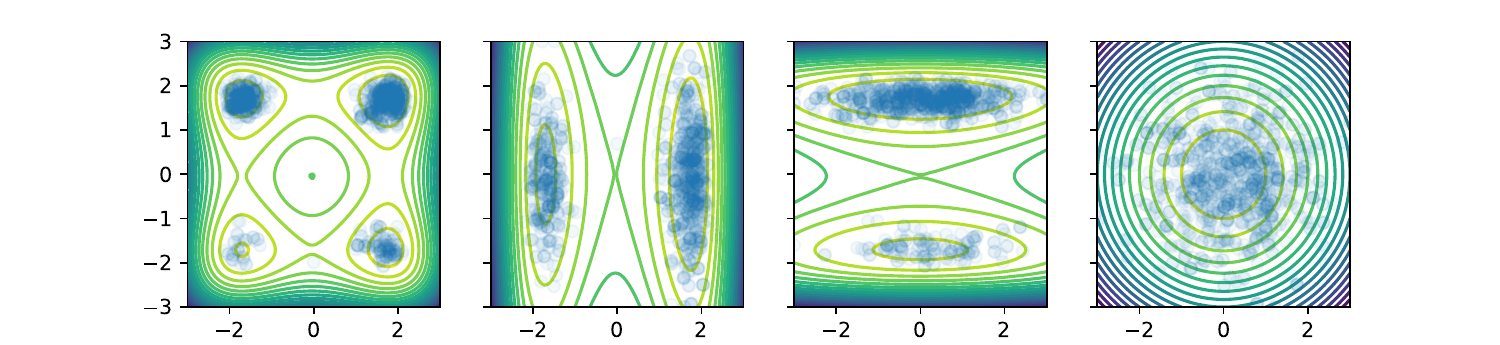}
    \caption{$N=10, K=5$.}  
    \end{subfigure}
    \begin{subfigure}{0.49\textwidth}
     \includegraphics[width=1.1\linewidth]{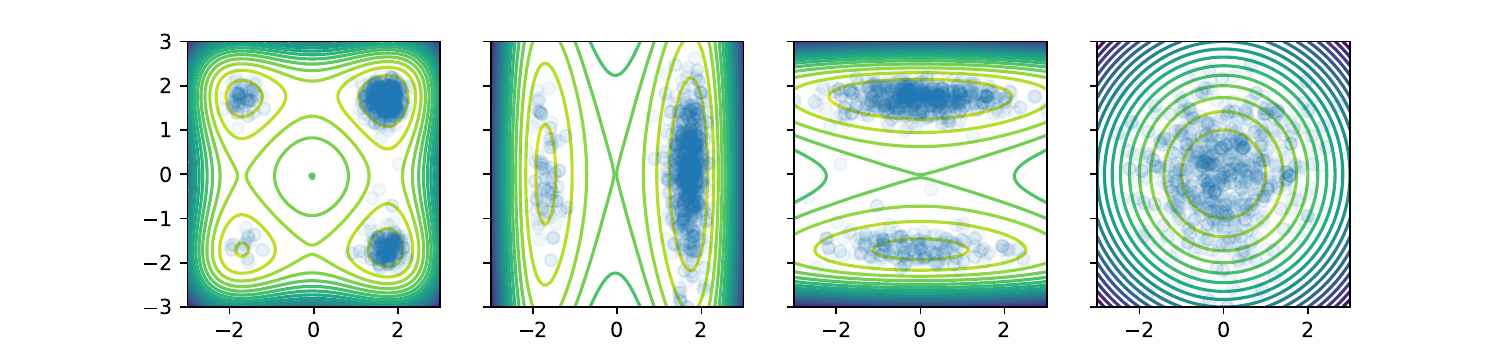}
    \caption{$N=30, K=1$.}  
    \end{subfigure}
    \begin{subfigure}{0.49\textwidth}
      \includegraphics[width=1.1\linewidth]{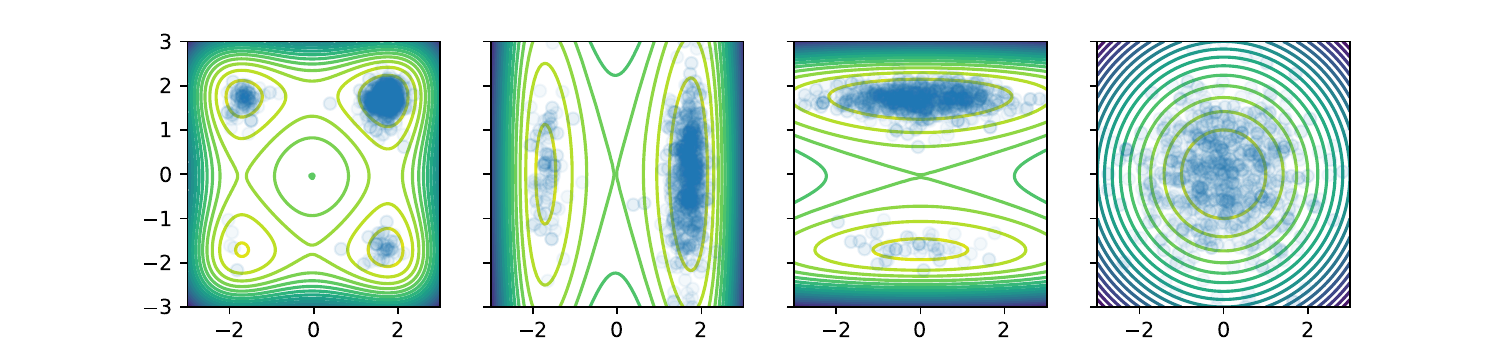}
    \caption{$N=30, K=5$.}  
    \end{subfigure}
    % \begin{subfigure}{0.49\textwidth}
    %   \includegraphics[width=1.1\linewidth]{plots/appendix_mw/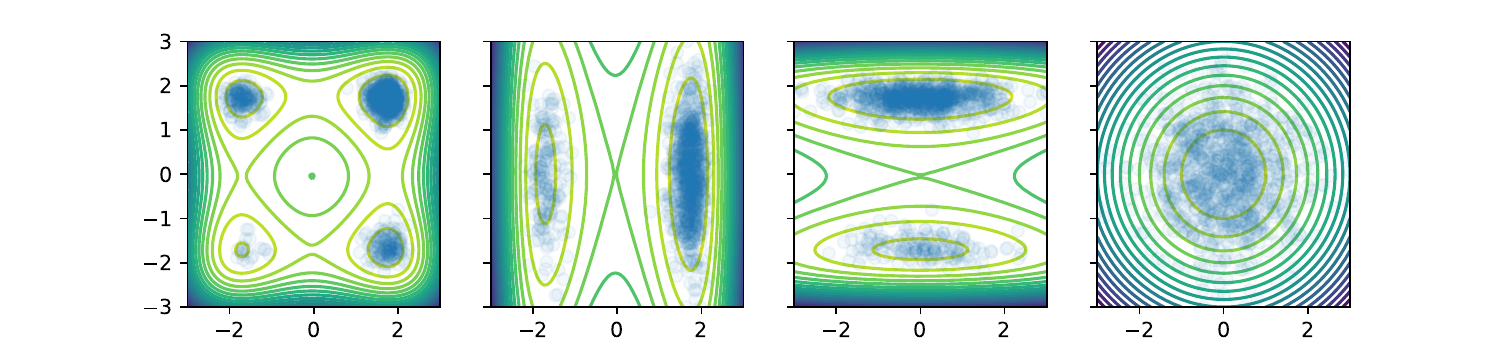}
    % \caption{Ground truth}  
    % \end{subfigure}
    \caption{Visualisation of ManyWell-32 samples generated by 1,000 consecutive CMCD-APT steps. 
    }
    \label{fig:mw-vis-cmcd}
\end{figure}

\clearpage

\begin{figure}[H]
    \centering
    \begin{subfigure}{0.49\textwidth}
      \includegraphics[width=1.1\linewidth]{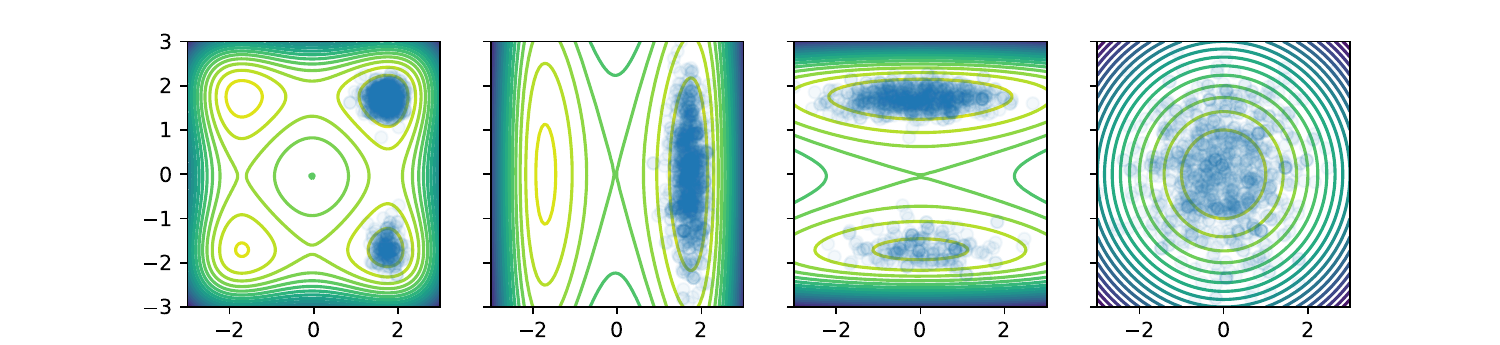}
    \caption{$N=5, K=1$.}  
    \end{subfigure}
    \begin{subfigure}{0.49\textwidth}
      \includegraphics[width=1.1\linewidth]{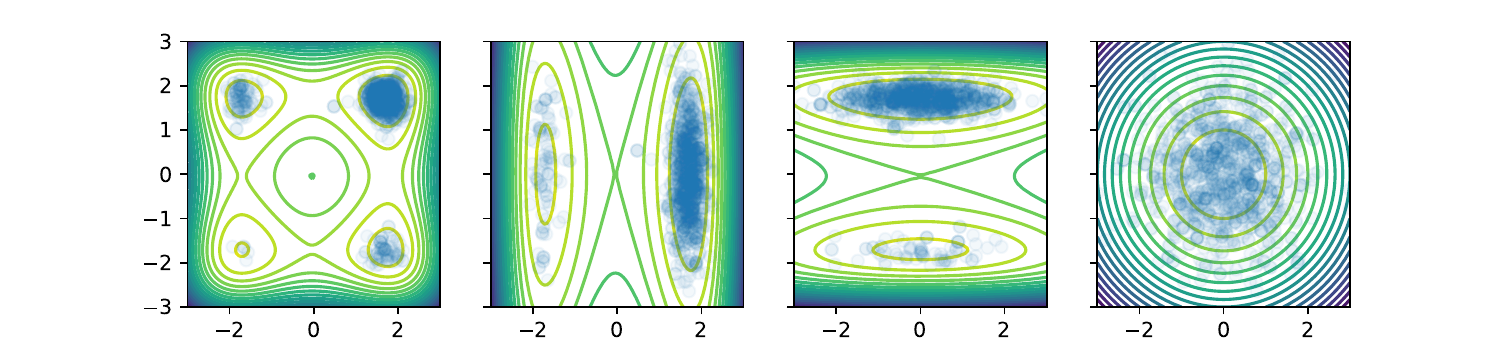}
    \caption{$N=5, K=5$.}  
     \end{subfigure}
    \begin{subfigure}{0.49\textwidth}
     \includegraphics[width=1.1\linewidth]{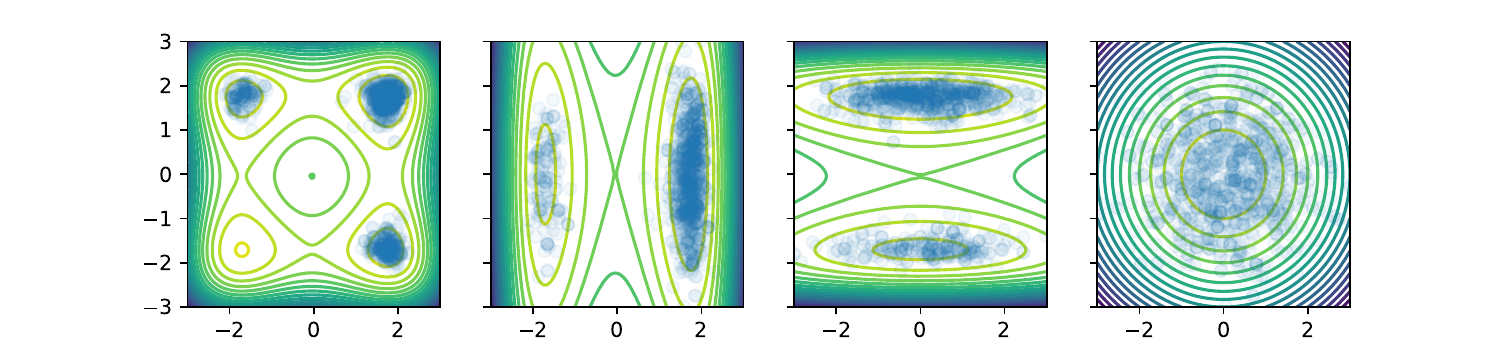}
    \caption{$N=10, K=1$.}  
    \end{subfigure}
    \begin{subfigure}{0.49\textwidth}
      \includegraphics[width=1.1\linewidth]{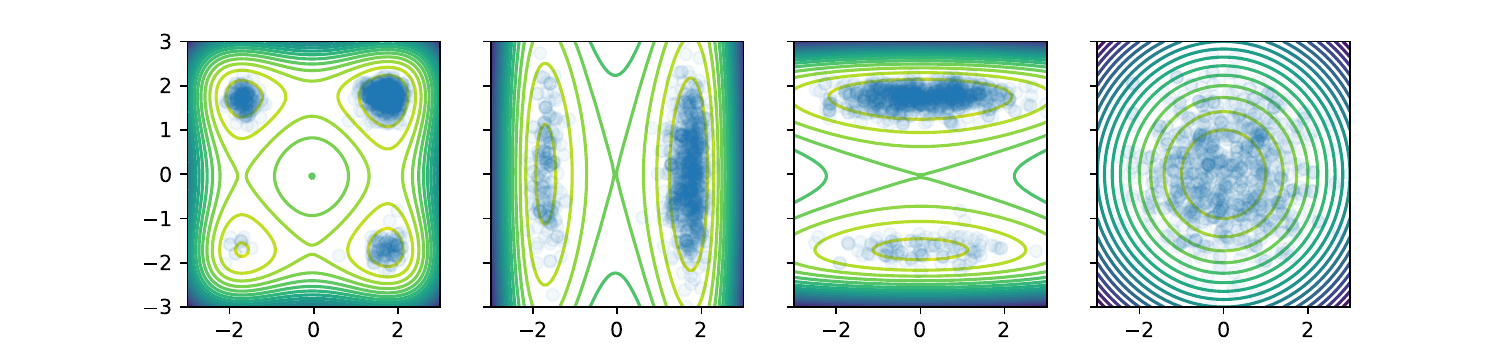}
    \caption{$N=10, K=5$.}  
    \end{subfigure}
    \begin{subfigure}{0.49\textwidth}
     \includegraphics[width=1.1\linewidth]{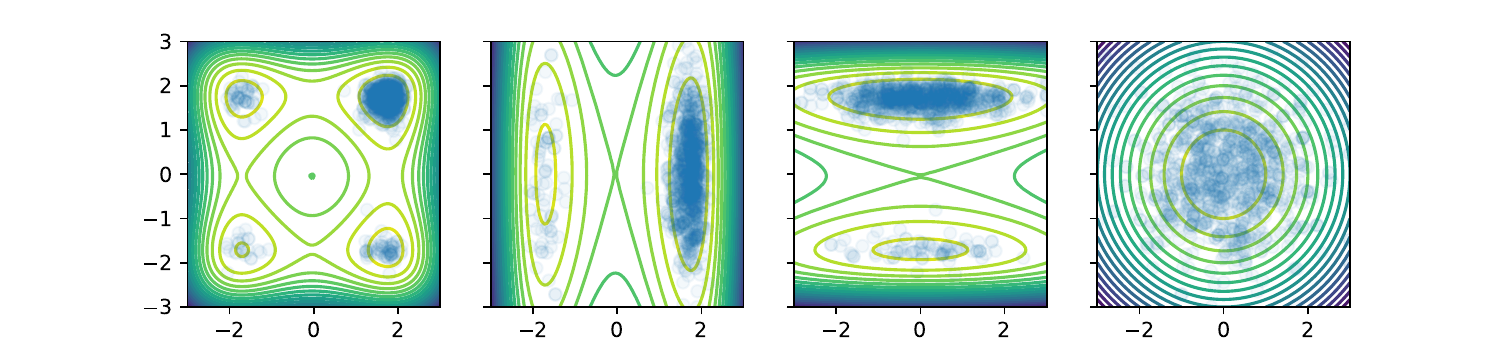}
    \caption{$N=30, K=1$.}  
    \end{subfigure}
    \begin{subfigure}{0.49\textwidth}
      \includegraphics[width=1.1\linewidth]{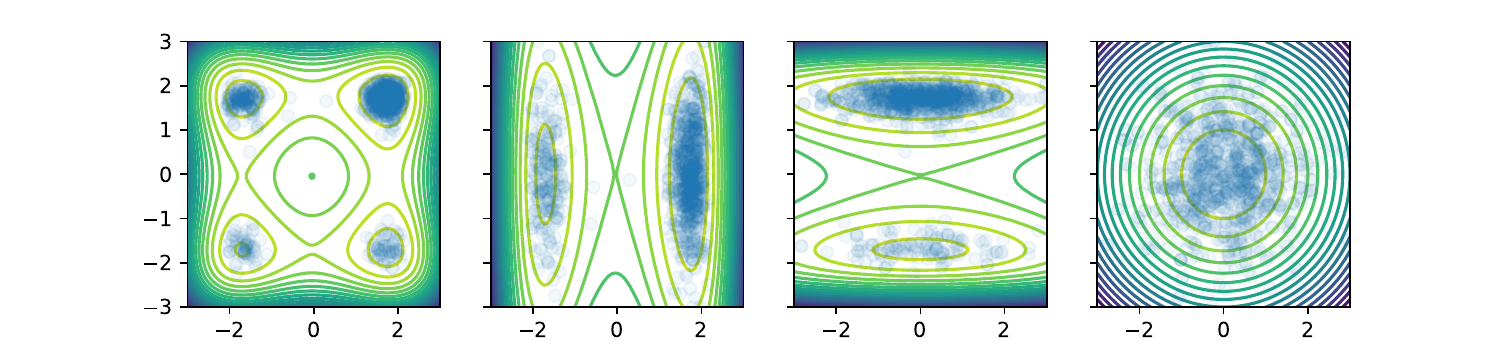}
    \caption{$N=30, K=5$.}  
    \end{subfigure}
    \caption{ManyWell-32 samples generated by 1,000 consecutive Diff-APT steps.
    }
    \label{fig:mw-vis-diff}
\end{figure}
\begin{figure}[H]
    \centering
    \includegraphics[width=0.539\linewidth]{mw_true.pdf}
    \caption{1,000 independent ground truth samples from ManyWell-32}
    \label{fig:mw_gt_vis}
\end{figure}

\subsection{Comparison of Acceleration Methods for ManyWell-32 and DW-4}

We take our trained models from Section \ref{sec:free-energy} and provide the same comparison with PT as in Table \ref{tab:gmm_all} for ManyWell-32 and DW-4 below.

\begin{table}[h!]
\centering
\caption{PT versus APT with different acceleration methods, targeting ManyWell-32 in 32 dimensions and standard Gaussian reference using $N=5, 10, 30$ parallel chains for $T=100,000$ iterations. For each method, we report the round trips (R), round trips per target evaluation, denoted as compute-normalised round trips (CN-R), the number of neural network evaluations per parallel chain every iteration (Neural Calls), and $\Lambda_K$ estimated using $N=30$ chains $(\hat{\Lambda}_K)$.}\label{tab:mw_all}
\resizebox{\textwidth}{!}{%
\begin{tabular}{lccccccccc@{}}
\toprule
\# Chain        &      &      & \multicolumn{2}{c}{$N=5$}        & \multicolumn{2}{c}{$N=10$}      & \multicolumn{2}{c}{$N=30$} \\ \cmidrule(r){4-5} \cmidrule(r){6-7} \cmidrule(r){8-9} 
Method         & Neural Calls $(\downarrow)$ & $\hat{\Lambda}_K$ $(\downarrow)$  & R $(\uparrow)$   & CN-R $(\uparrow)$ & R $(\uparrow)$    & CN-R $(\uparrow)$  & R $(\uparrow)$    & CN-R $(\uparrow)$   \\ \midrule
CMCD-APT ($K=1$) & 2     & 4.384      &  1154  & 577.0    & 2802 & \textbf{1401.0} & 4729  & \textbf{2364.5}  \\
CMCD-APT ($K=2$) & 3     & 3.827      & 1587  & 529.0     & 3640 & 1213.3 & 5544  & 1848.0  \\
CMCD-APT ($K=5$) & 6     & \textbf{3.148}      & 2878  & 479.7     & 4790  &  798.3  & 6678 & 1113.0  \\ \midrule
Diff-APT ($K=1$) & 2     & 6.663      & 425   &  212.5     & 2402 & 1201  & 4398  & 2199  \\
Diff-APT ($K=2$) & 3     & 5.225      & 1387  & 462.3 & 4022 & 1340.7  & 5894  & 1964.7  \\
Diff-APT ($K=5$) & 6     & 3.94      &  \textbf{3627}    &  \textbf{604.5}     & \textbf{5704} & 950.7  & \textbf{7634}  & 1272.3   \\ \midrule
Diff-PT ($K=0$)  & 2     & 7.423      &   251   &  125.5     & 1561  & 780.5  & 3440  & 1720   \\ \midrule
PT             & \textbf{0}       & 5.475      & 550   & 275   & 1879  & 939.5  & 3733  & 1866.5   \\ \bottomrule
\end{tabular}%
}
\label{tab:round-trip-APT-vs-PT-mw}
\end{table}

\clearpage

\begin{table}[h!]
\centering
\caption{PT versus APT with different acceleration methods, targeting DW-4 in 8 dimensions and standard Gaussian reference using $N=5,10,30$ parallel chains for $T=100,000$ iterations. For each method, we report the round trips (R), round trips per target evaluation, denoted as compute-normalised round trips (CN-R), the number of neural network evaluations per parallel chain every iteration (Neural Calls), and $\Lambda_K$ estimated using $N=30$ chains $(\hat{\Lambda}_K)$.}\label{tab:dw_all}
\resizebox{\textwidth}{!}{%
\begin{tabular}{lccccccccc@{}}
\toprule
\# Chain        &      &      & \multicolumn{2}{c}{$N=5$}        & \multicolumn{2}{c}{$N=10$}      & \multicolumn{2}{c}{$N=30$} \\ \cmidrule(r){4-5} \cmidrule(r){6-7} \cmidrule(r){8-9} 
Method         & Neural Calls $(\downarrow)$ & $\hat{\Lambda}_K$ $(\downarrow)$  & R $(\uparrow)$   & CN-R $(\uparrow)$ & R $(\uparrow)$    & CN-R $(\uparrow)$  & R $(\uparrow)$    & CN-R $(\uparrow)$   \\ \midrule
CMCD-APT ($K=1$) & 2     & 3.173      &  3020   & 1510.0    & 6407 & 3203.5 & 9456  & \textbf{4728.0}  \\
CMCD-APT ($K=2$) & 3     & 2.671      & 4239  & 1413.0 & 7549  & 2516.3 & 10538  & 3512.7  \\
CMCD-APT ($K=5$) & 6     & \textbf{2.107}      & 6971 & 1161.8 &   9808   &     1634.7  & \textbf{12634} &2105.7 \\ \midrule
Diff-APT ($K=1$) & 2     & 4.565      & 4331   &  2165.5     & 7397 & \textbf{3698.5}  & 7729  & 3864.5  \\
Diff-APT ($K=2$) & 3     & 3.810      & 7187  & \textbf{2395.7} & 10176 & 3392  & 9176  & 3058.7  \\
Diff-APT ($K=5$) & 6     & 4.358      &  \textbf{12456}    &  2076     & \textbf{12740} & 2123.3  & 8104  & 1350.7   \\ \midrule
Diff-PT ($K=0$)        & 2       & 4.739      &   2962   &  1481     & 5862  & 2921  & 7067  & 3533.5   \\ \midrule
PT             & \textbf{0}       & 4.016      & 2329   & 1164.5   & 5128  & 2564  & 7610  & 3805   \\ \bottomrule
\end{tabular}%
}
\label{tab:round-trip-APT-vs-PT-dw4}
\end{table}

\end{document}